\documentclass[conference]{IEEEtran}

\usepackage{xcolor}
\usepackage[ruled,vlined]{algorithm2e}
\usepackage{multirow}
\usepackage[printonlyused]{acronym}
\usepackage{amsmath,amssymb,amsfonts,amsthm}
\usepackage{algorithmic}
\usepackage{graphicx}
\usepackage{balance}
\usepackage{mathtools}
\usepackage{booktabs}
\usepackage{textcomp}
\usepackage{tikz}
\usepackage{enumerate}
\usepackage{enumitem,kantlipsum}
\usepackage{soul}
\usepackage{colortbl}
\usepackage{pifont}
\usepackage{cancel}
\usepackage{authblk}
\newcommand{\cmark}{\ding{52}}
\newcommand{\xmark}{\ding{56}}
\usepackage{array}
\def\BibTeX{{\rm B\kern-.05em{\sc i\kern-.025em b}\kern-.08em
    T\kern-.1667em\lower.7ex\hbox{E}\kern-.125emX}}

\newcommand*\emptycirc[1][1ex]{\tikz\draw[thick] (0,0) circle (#1);} 
\newcommand*\halfcirc[1][1ex]{%
  \begin{tikzpicture}
  \draw[fill] (0,0)-- (90:#1) arc (90:270:#1) -- cycle ;
  \draw[thick] (0,0) circle (#1);
  \end{tikzpicture}}
\newcommand*\fullcirc[1][1ex]{\tikz\fill (0,0) circle (#1);} 

% Hyperref setup - mb remove 
\usepackage[]{hyperref}
\hypersetup{colorlinks=true, unicode=true, linkcolor=purple, citecolor=blue}

% Color definitions
\definecolor{t0}{rgb}{0.0, 0.5, 0.5}
\definecolor{t1}{rgb}{0.007, 0.495, 0.497}
\definecolor{t2}{rgb}{0.015, 0.49, 0.494}
\definecolor{t3}{rgb}{0.022, 0.485, 0.491}
\definecolor{t4}{rgb}{0.03, 0.48, 0.488}
\definecolor{t5}{rgb}{0.037, 0.475, 0.485}
\definecolor{t6}{rgb}{0.045, 0.47, 0.482}
\definecolor{t7}{rgb}{0.052, 0.465, 0.479}
\definecolor{t8}{rgb}{0.059, 0.46, 0.476}
\definecolor{t9}{rgb}{0.067, 0.455, 0.473}
\definecolor{t10}{rgb}{0.074, 0.45, 0.47}
\definecolor{t11}{rgb}{0.082, 0.446, 0.468}
\definecolor{t12}{rgb}{0.089, 0.441, 0.465}
\definecolor{t13}{rgb}{0.097, 0.436, 0.462}
\definecolor{t14}{rgb}{0.104, 0.431, 0.459}
\definecolor{t15}{rgb}{0.111, 0.426, 0.456}
\definecolor{t16}{rgb}{0.119, 0.421, 0.453}
\definecolor{t17}{rgb}{0.126, 0.416, 0.45}
\definecolor{t18}{rgb}{0.134, 0.411, 0.447}
\definecolor{t19}{rgb}{0.141, 0.406, 0.444}
\definecolor{t20}{rgb}{0.149, 0.401, 0.441}
\definecolor{t21}{rgb}{0.156, 0.396, 0.438}
\definecolor{t22}{rgb}{0.163, 0.391, 0.435}
\definecolor{t23}{rgb}{0.171, 0.386, 0.432}
\definecolor{t24}{rgb}{0.178, 0.381, 0.429}
\definecolor{t25}{rgb}{0.186, 0.376, 0.426}
\definecolor{t26}{rgb}{0.193, 0.371, 0.422}
\definecolor{t27}{rgb}{0.2, 0.366, 0.419}
\definecolor{t28}{rgb}{0.208, 0.361, 0.416}
\definecolor{t29}{rgb}{0.215, 0.356, 0.413}
\definecolor{t30}{rgb}{0.223, 0.351, 0.41}
\definecolor{t31}{rgb}{0.23, 0.347, 0.408}
\definecolor{t32}{rgb}{0.238, 0.342, 0.405}
\definecolor{t33}{rgb}{0.245, 0.337, 0.402}
\definecolor{t34}{rgb}{0.252, 0.332, 0.399}
\definecolor{t35}{rgb}{0.26, 0.327, 0.396}
\definecolor{t36}{rgb}{0.267, 0.322, 0.393}
\definecolor{t37}{rgb}{0.275, 0.317, 0.39}
\definecolor{t38}{rgb}{0.282, 0.312, 0.387}
\definecolor{t39}{rgb}{0.29, 0.307, 0.384}
\definecolor{t40}{rgb}{0.297, 0.302, 0.381}
\definecolor{t41}{rgb}{0.304, 0.297, 0.378}
\definecolor{t42}{rgb}{0.312, 0.292, 0.375}
\definecolor{t43}{rgb}{0.319, 0.287, 0.372}
\definecolor{t44}{rgb}{0.327, 0.282, 0.369}
\definecolor{t45}{rgb}{0.334, 0.277, 0.366}
\definecolor{t46}{rgb}{0.342, 0.272, 0.363}
\definecolor{t47}{rgb}{0.349, 0.267, 0.36}
\definecolor{t48}{rgb}{0.356, 0.262, 0.357}
\definecolor{t49}{rgb}{0.364, 0.257, 0.354}
\definecolor{t50}{rgb}{0.371, 0.252, 0.351}
\definecolor{t51}{rgb}{0.379, 0.248, 0.349}
\definecolor{t52}{rgb}{0.386, 0.243, 0.346}
\definecolor{t53}{rgb}{0.394, 0.238, 0.343}
\definecolor{t54}{rgb}{0.401, 0.233, 0.34}
\definecolor{t55}{rgb}{0.408, 0.228, 0.337}
\definecolor{t56}{rgb}{0.416, 0.223, 0.334}
\definecolor{t57}{rgb}{0.423, 0.218, 0.331}
\definecolor{t58}{rgb}{0.431, 0.213, 0.328}
\definecolor{t59}{rgb}{0.438, 0.208, 0.325}
\definecolor{t60}{rgb}{0.446, 0.203, 0.322}
\definecolor{t61}{rgb}{0.453, 0.198, 0.319}
\definecolor{t62}{rgb}{0.46, 0.193, 0.316}
\definecolor{t63}{rgb}{0.468, 0.188, 0.313}
\definecolor{t64}{rgb}{0.475, 0.183, 0.31}
\definecolor{t65}{rgb}{0.483, 0.178, 0.307}
\definecolor{t66}{rgb}{0.49, 0.173, 0.304}
\definecolor{t67}{rgb}{0.498, 0.168, 0.301}
\definecolor{t68}{rgb}{0.505, 0.163, 0.298}
\definecolor{t69}{rgb}{0.512, 0.158, 0.295}
\definecolor{t70}{rgb}{0.52, 0.153, 0.292}
\definecolor{t71}{rgb}{0.527, 0.149, 0.29}
\definecolor{t72}{rgb}{0.535, 0.144, 0.287}
\definecolor{t73}{rgb}{0.542, 0.139, 0.284}
\definecolor{t74}{rgb}{0.55, 0.134, 0.281}
\definecolor{t75}{rgb}{0.557, 0.129, 0.278}
\definecolor{t76}{rgb}{0.564, 0.124, 0.274}
\definecolor{t77}{rgb}{0.572, 0.119, 0.271}
\definecolor{t78}{rgb}{0.579, 0.114, 0.268}
\definecolor{t79}{rgb}{0.587, 0.109, 0.265}
\definecolor{t80}{rgb}{0.594, 0.104, 0.262}
\definecolor{t81}{rgb}{0.601, 0.099, 0.259}
\definecolor{t82}{rgb}{0.609, 0.094, 0.256}
\definecolor{t83}{rgb}{0.616, 0.089, 0.253}
\definecolor{t84}{rgb}{0.624, 0.084, 0.25}
\definecolor{t85}{rgb}{0.631, 0.079, 0.247}
\definecolor{t86}{rgb}{0.639, 0.074, 0.244}
\definecolor{t87}{rgb}{0.646, 0.069, 0.241}
\definecolor{t88}{rgb}{0.653, 0.064, 0.238}
\definecolor{t89}{rgb}{0.661, 0.059, 0.235}
\definecolor{t90}{rgb}{0.668, 0.054, 0.232}
\definecolor{t91}{rgb}{0.676, 0.05, 0.23}
\definecolor{t92}{rgb}{0.683, 0.045, 0.227}
\definecolor{t93}{rgb}{0.691, 0.04, 0.224}
\definecolor{t94}{rgb}{0.698, 0.035, 0.221}
\definecolor{t95}{rgb}{0.705, 0.03, 0.218}
\definecolor{t96}{rgb}{0.713, 0.025, 0.215}
\definecolor{t97}{rgb}{0.72, 0.02, 0.212}
\definecolor{t98}{rgb}{0.728, 0.015, 0.209}
\definecolor{t99}{rgb}{0.735, 0.01, 0.206}
\definecolor{t100}{rgb}{0.75, 0.0, 0.2}

%-------------------------------------------------------------------------------
\begin{document}
%-------------------------------------------------------------------------------
\newtheorem{theorem}{Theorem}[section]
\newtheorem{proposition}{Proposition}[section]
\newtheorem{corollary}[proposition]{Corollary}

% For listing affiliations in one line
\makeatletter
\renewcommand\AB@affilsepx{, \protect\Affilfont}
\makeatother
\renewcommand\Authands{, }
\renewcommand\Affilfont{\normalsize}

\title{The Adaptive Arms Race: Redefining Robustness in AI Security}
% something fell off a precipice

\author[*]{Ilias Tsingenopoulos}
\author[*]{Vera Rimmer}
\author[*]{Davy Preuveneers}
\author[$\dag$]{Fabio Pierazzi}
\author[$\dag$]{Lorenzo Cavallaro}
\author[*]{Wouter Joosen}

\affil[*]{KU Leuven}
\affil[$\dag$]{University College London}

\maketitle

\begin{abstract}
Despite considerable efforts on making them robust, real-world AI-based systems remain vulnerable to decision based attacks, as definitive proofs of their operational robustness have so far proven intractable.
Canonical robustness evaluation relies on adaptive attacks, which leverage complete knowledge of the defense and are tailored to bypass it.
This work broadens the notion of adaptivity, which we employ to enhance both attacks and defenses, showing how they can benefit from mutual learning through interaction.
We introduce a framework for adaptively optimizing black-box attacks and defenses under the competitive game they form.
To assess robustness reliably, it is essential to evaluate against realistic and worst-case attacks.
We thus enhance attacks and their evasive arsenal \emph{together} using \ac{RL}, apply the same principle to defenses, and evaluate them first independently and then jointly under a multi-agent perspective.

We find that active defenses, those that dynamically control system responses, are an essential complement to model hardening against decision-based attacks; that these defenses can be circumvented by adaptive attacks, something that elicits defenses being adaptive too.
Our findings, supported by an extensive theoretical and empirical investigation, confirm that adaptive adversaries pose a serious threat to black-box AI-based systems, rekindling the proverbial arms race.
Notably, our approach outperforms the state-of-the-art black-box attacks \textit{and} defenses, while bringing them together to render effective insights into the robustness of real-world deployed ML-based systems.
\end{abstract}

%\begin{IEEEkeywords}
%component, formatting, style, styling, insert
%\end{IEEEkeywords}

\setlength{\abovedisplayskip}{4pt}
\setlength{\belowdisplayskip}{4pt}

\acrodef{ML}[ML]{machine learning}
\acrodef{DL}[DL]{deep learning}
\acrodef{RL}[RL]{reinforcement learning}
\acrodef{AML}[AML]{adversarial machine learning}
\acrodef{MDP}[MDP]{Markov Decision Process}
\acrodef{DNN}[DNN]{deep neural network}
\acrodef{RNN}[RNN]{recurrent neural network}
\acrodef{SVM}[SVM]{Support Vector Machine}
\acrodef{NLP}[NLP]{natural language processing}
\acrodef{GAN}[GAN]{generative adversarial network}

\acrodef{gan}[GAN]{generative adversarial network}
\acrodef{IID}[IID]{independent and identically distributed}
\acrodef{OOD}[OOD]{out-of-distribution}
\acrodef{HW}[HW]{Hardwaere}
\acrodef{SW}[SW]{Software}
\acrodef{AE}[AE]{Adversarial Example}
\acrodef{FPS}[FPS]{frames per second}
\acrodef{PPO}[PPO]{Proximal Policy Optimization}

% abbreviations
\acrodef{wrt}[\emph{w.r.t.}]{with respect to}
\acrodef{st}[\emph{s.t.}]{such that}

\section{Introduction}

AI models are predominantly trained, validated, and deployed with little regard to their correct functioning under adversarial activity, often leaving safety and security considerations as an afterthought.
Adversarial contexts further aggravate the typical generalization challenges that these models face with threats beyond model evasion (misclassification), like model extraction, model inversion, and model poisoning \cite{he2020towards}.
At the same time, the systems these models are components of often expose interfaces that can be queried and used as adversarial ``instructors'', like in constructing adversarial malware against existing AI-based malware detection~\cite{anderson2018learning, demetrio2021functionality}.
Focusing on adversarial examples for model evasion, the most reliable mitigation to date is adversarial training~\cite{madry2017towards, wang2019convergence}, an approach not without limitations as these models often remain irreducibly vulnerable at deployment, particularly against black-box, decision-based attacks~\cite{brendel2018decision, chen2020hopskipjumpattack, yan2020policy}.
Nevertheless, all such attacks exhibit a behavior at-the-interface that can be described as adversarial itself, a generalization that subsumes adversarial examples and opens a path towards novel defenses and mitigations.

Adversarial behavior is a temporal extension of adversarial examples, perhaps not malicious or harmful in isolation, yet part of an attack as it unfolds over time; it is also the canonical description of adversarial examples in domains like dynamic malware analysis and adversarial \ac{RL} \cite{tsingenopoulos2022adaptive, gleave2020adversarial}.
Aside from making the underlying models more robust, this behavior can be countered as such, rather than relying on hardened models exclusively.
As AI models cannot update their decision boundary in an online manner and in response to adversarial activity on their interface, there \emph{has} to be a complement to model hardening: for instance \emph{active} defenses such as rejection or misdirection \cite{barbero2022transcending, sengupta2020multi, chen2020stateful}.

In this study we identify and address a crucial gap: evaluating the robustness of defenses against oblivious, non-adaptive, and therefore suboptimal attackers, renders any results unreliable~\cite{tramer2020adaptive,croce2020reliable}.
The key observation we make is that robustness \textit{must} account for the ability of the adversary to adapt while interacting with the model.
To that end, we expand the conventional notion of adaptive, from \emph{adapted} attacks that have an empirical configuration to bypass the defense, to include the capability to \emph{self-adapt}, where attacks adapt their parameters and evasive actions \textit{together}, based on how the target model and its defenses respond~\cite{aastrom1995adaptive}.
We demonstrate theoretically and empirically how self-adaptive attacks can use \ac{RL} to modify their policies to become both optimal \emph{and} evade active detection.
Notably, this can be performed in a gradient-based manner even in fully black-box contexts~[\ref{th:epg}], and is a capability that \emph{properly reflects} the level of adversarial threat and in that way does not overestimate the empirical robustness; real attackers will compute gradients after all~\cite{apruzzese2023real}.

% Self-adaptive attacks, however, induce active and adaptive defenses in turn.
Through proper threat modeling and self-adaptation, attacks can reach their full potential, enabling the development of effective defensive policies.
To frame the need for adaptive evaluations in \ac{AML} differently: a defense can be considered trustworthy only if it is evaluated against an optimal adversary.
This mutual interdependence underscores the necessity for \emph{both} attacks and defenses to be self-adaptive, thereby establishing the competitive, zero-sum dynamic inherent in their interaction.
In this work, we examine robustness from both perspectives: first, how to fully optimize decision-based attacks, and second, how to devise reliable countermeasures.
We explore both offensive and defensive strategies in depth, and make the following key contributions:

\begin{enumerate}[wide, labelindent=5pt, noitemsep, nolistsep, label=\textbf{\arabic*}.]
\item We demonstrate that active defenses against decision-based attacks are a \textit{necessary} but \textit{insufficient} complement to model hardening.
Active defenses are inevitably bypassed by self-adaptive attackers however, and necessitate \textit{self-adaptive} defenses too.
\item To facilitate reasoning on adaptive attacks and defenses, we introduce a unified framework called ``Adversarial Markov Games'' (\textbf{AMG}).
We demonstrate how adversaries can optimize their attack policy and evade active detection \textit{at the same time}; as a counter, we develop a novel active defense and employ \ac{RL} agents to \textit{adapt} and optimize both.
\item In an extensive empirical evaluation on image classification and across a wide set of adversarial scenarios, we validate our theoretical analysis and show that self-adaptation with RL \textit{outperforms} vanilla black-box attacks, model hardening defenses like adversarial training, and notably \textbf{both} the state-of-the-art adaptive attack (OARS \cite{feng2023stateful}) and stateful defense (Blacklight~\cite{li2022blacklight}).
This supports self-adaptation as an \emph{essential} component when evaluating robustness to black-box attacks.
\item For reproducibility, and to facilitate further research, we open-source our code\footnote{https://anonymous.4open.science/r/AMG-AD16}.
\end{enumerate}

Our work highlights that in the domain of black-box \ac{AML}, robust evaluations \textit{should} go a step further than adapting attacks: both attacks and defenses should have the capability to optimize their strategies through interaction and in direct response to other agency in their environment.
The remainder of the paper is structured as follows:
Section \ref{sec:background} provides the necessary background on the domain and reviews the related work.
Section \ref{sec:approach} introduces and motivates our theoretical analysis of robustness under decision-based attacks.
Section \ref{sec:threat} explains the threat model and the concrete design choices.
In Section \ref{sec:evaluation} we elaborate on our experimentation and analyze our results.
We conclude with Section \ref{sec:discussion} where we discuss key insights, limitations and challenges.
% \section{Related Work \& Problem Statement}
\section{Preliminaries}
\label{sec:background}
In this work, we focus on the category of adversarial attacks known as \textbf{decision-based}, a subset of \text{query-based} attacks that operate solely on the \textbf{hard-label} outputs of the model and are a highly realistic and pervasive threat in AI-based cybersecurity environments.
Despite the lack of the closed-form expression of the model under attack, given enough queries their effectiveness can match the one of white-box techniques~\cite{carlini2017towards, croce2020reliable}.

\subsection{Attacks \& Mitigations}
While adversarial attacks have been extensively researched in both white and black-box contexts, defenses have predominantly focused on white-box~\cite{madry2017towards, wang2019convergence}.
As the black-box setting discloses considerably less information, a seemingly intuitive conclusion is that white-box defenses should suffice for the black-box case too.
Yet black-box attacks like~\cite{brendel2018decision, chen2020hopskipjumpattack} have shown to be highly effective against a wide range of defenses like \emph{gradient masking}~\cite{athalye2018obfuscated}, \emph{preprocessing}~\cite{qin2021random, byun2022effectiveness}, and \emph{adversarial training}~\cite{madry2017towards}.
The vast majority of adversarial defenses provide either limited robustness or are eventually evaded by adapted attacks \cite{tramer2020adaptive}.
Characteristically, preprocessing defenses are identified and bypassed by expending queries for reconnaissance~\cite{sitawarin2022preprocessors}.

The partial exception to this rule is adversarial training.
Given dataset $D = {(x_i, y_i)}^{n}_{i=1}$ with classes $C$ where $x_i \in \mathbb{R}^d$ is a clean example and $y_i \in {1,..., C}$ is the associated label, the objective of adversarial training is to solve the following \emph{min-max} optimization problem:

\begin{equation}
    \underset{\phi}{\operatorname{min}} \mathbb{E}_{i\sim D} \underset{\Vert \delta_i \Vert_{L_p} \leq \epsilon}{\operatorname{max}} \; \mathcal{L}(h_{\phi}(x_i + \delta_i), y_i)
\label{eqn:adv_train}
\end{equation}

\noindent where $x_i + \delta_i$ is an adversarial example of $x_i$, $h_\phi : \mathbb{R^d} \rightarrow \mathbb{R^C}$ is a hypothesis function and $\mathcal{L}(h_\phi(x_i + \delta_i), y_i)$ is the loss function for the adversarial example $x_i + \delta_i$.
The inner maximization loop finds an adversarial example of $x_i$ with label $y_i$ for a given $L_p$-norm (with $L_p \in \{0,1,2,\inf\}$), such that $\Vert \delta_i\Vert_{l} \leq \epsilon$ and $h_\phi(x_i + \delta_i) \neq y_i$.
The outer loop is the ordinary minimization task, typically solved with stochastic gradient descent.
While the convergence and robustness properties of adversarial training have been investigated through the computation of the saddle point and by interleaving normal and adversarial training \cite{wang2019convergence}, the min-max principle is conspicuous: minimize the possible loss for a worst-case (max) scenario.

\subsection{Stateful Defenses}
Decision-based attacks possess properties that can be valuable for devising defenses against them, \textit{in addition} to adversarial training.
One such property is their intrinsic sequentiality: by following a policy toward the optimal adversarial example, the generated candidates are correlated.
Note that this might not hold for the queries \textit{themselves}, as the adversary may apply transformations that the model is invariant to, such as the query blinding strategy in Chen et al.~\cite{chen2020stateful}.
This work is the first to employ a \emph{stateful} defense against query-based attacks.
Another stateful defense is PRADA \cite{juuti2019prada}, devised against model extraction but effective against evasion too.
These approaches assume however that queries can be consistently linked (via metadata like IP or account, cf.~\autoref{tab:comparison}) to uniquely identifiable actors -- who also exhibit limited to no collaboration -- so that a query buffer can be built for each.

This limitation, together with the scalability issues, was recently addressed in the Blacklight defense, by employing hashing and quantization~\cite{li2022blacklight}.
Blacklight remains a similarity-based defense, thus vulnerable to circumvention if an adversary can find a query generation policy that preserves the attack functionality while evading detection.
OARS achieved this by adapting existing attacks through the rejection signal Blacklight returns~\cite{feng2023stateful}.
Ultimately, any (stateful) defense has to balance the trade-off between robust and clean accuracy; as we demonstrate in this work, this trade-off can be measured reliably only if the attacker is properly adaptive.

\subsection{On Being Adaptive}
\label{sub:adaptive}

The correct way to evaluate any proposed defense is against \emph{adaptive} attacks, that is with explicit knowledge of the concrete mechanisms of a defense~\cite{tramer2020adaptive}.
In computer security this is known as the stipulation that security through obscurity does not work, as the robustness of defenses should not rely on keeping their mechanism secret.
%while every attack has a potential mitigation.
If model hardening -- for instance by adversarial training -- is the defensive counterpart to white-box attacks, active defenses like stateful detection are the counterpart to decision-based attacks, and as we will further demonstrate, also the \emph{necessary} complement to hardening a model against them.

At the same time, the level of threat that attacks pose is often unclear or not thoroughly evaluated.
Previous work has demonstrated that the loss functions and parameters of attacks are often suboptimal, leading to \textit{underestimating} their performance and thus \textit{overestimating} the claimed degree of robustness~\cite{croce2020reliable, pintor2022indicators}.
This underestimation is further aggravated in decision-based contexts, where the attacker is largely oblivious of any preprocessing or active defenses the black-box system might have.
The true performance of attacks therefore rests on the ability to adapt their operation policy and their evasive capabilities \textit{in tandem}.

\begin{figure}
    \centering
    \includegraphics[width=0.49\textwidth]{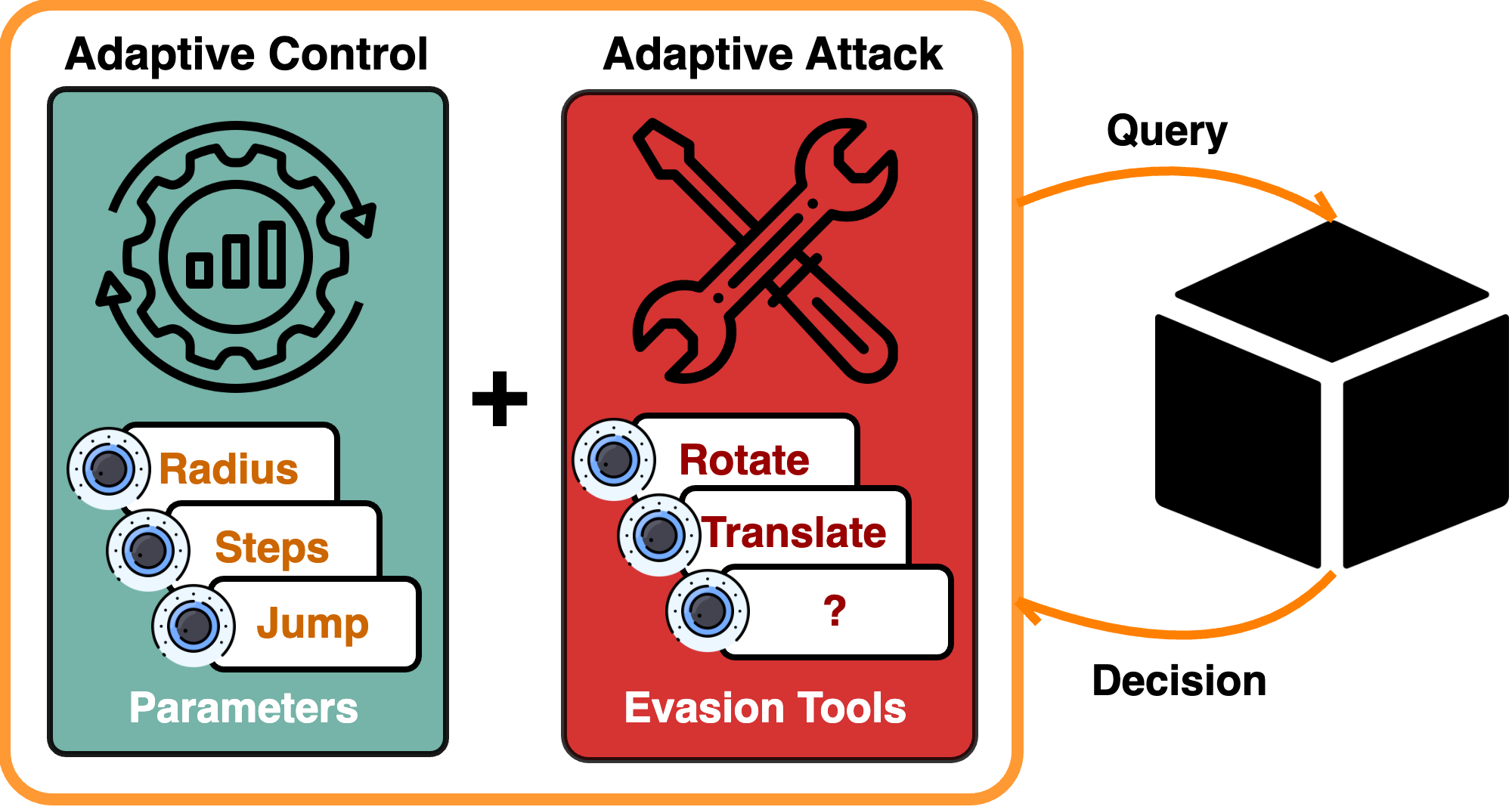}
    \caption{\small{In \ac{AML}, adaptive attacks are those with the capabilities (knobs) to bypass a defense; adaptive control is rather the precise tuning of all the known knobs. Against black-box systems, we can reformulate adaptive so that it signifies \textbf{both}. For instance in HSJA~\cite{chen2020hopskipjumpattack}, radius, steps, and jump are parameters of the attack, while rotate and translate are transformations that can evade a similarity-based defense.}}
    \label{fig:adaptivity}
\end{figure}

In \ac{AML}, ``adaptive'' by convention refers to attacks with full knowledge of how a defense works and the tools to bypass it; we denote such attacks as \textbf{adapted}.
In this work we expand the term to include \textit{adaptive control}, defined as the ability of a system to \textbf{self-adapt}: \emph{automatically} reconfigure itself in response to changes in the dynamics of the environment in order to achieve optimal behavior~\cite{aastrom1995adaptive}.
We use adaptive control in the sense ``attack optimization'' is used by Pintor et al.~\cite{pintor2022indicators}, but here for black-box systems.
What is to be controlled is typically known in advance and well-defined.
However, the moment we consider adaptive evaluations, \textit{new} controls are directly implied: in a similarity-based defense for instance, such controls would be transformations to the input that the model is invariant to.
To flesh out the twofold meaning of adaptive, one has to \textit{both} invent new knobs~\cite{hofstadter2008metamagical} -- the conventional understanding of adaptive, intractable to automate yet -- \textit{and} dynamically control their correct configuration that would lead to the optimal result (self-adaptive).
%The invention of knobs, a faculty strictly human so far, is a way to impart \emph{controllability} to the task, in this case evading a stateful defense.
We conceptualize this more general definition of adaptive, essential for having accurate evaluations against decision-based attacks, in~\autoref{fig:adaptivity}.

\begin{table*}[!ht]
\small
\caption{\small{Positioning of our work relative to prominent decision-based attacks and defenses and their individual properties. While prior works focus exclusively on offense or defense, ours unifies and reasons from both perspectives.}} 
\centering
\begin{tabular}{r cccc cccc}
\toprule
\multirow{2}{*}{\textbf{Work}} & \multicolumn{4}{c}{\textbf{\textcolor{purple!80}{Offensive}}} & \multicolumn{4}{c}{\textbf{\textcolor{teal!80}{Defensive}}} \\
\cmidrule(lr){2-5} \cmidrule(lr){6-9}
& \textbf{Optimized} & \textbf{Evasive} & \textbf{Adaptive} & \textbf{$\neg$Rejection} & \textbf{Active} & \textbf{Adaptive} & \textbf{$\neg$Metadata} &\textbf{Misdirection} \\
\midrule
\textbf{\textcolor{purple!80}{Boundary (2018)}}  \cite{brendel2018decision} & \emptycirc & \emptycirc & \emptycirc & \fullcirc & \emptycirc & \emptycirc & --- & --- \\
\textbf{\textcolor{purple!80}{BAGS (2018)}} \cite{brunner2019guessing} & \emptycirc & \emptycirc & \emptycirc & \fullcirc & \emptycirc & \emptycirc & --- & --- \\
\textbf{\textcolor{purple!80}{HSJA (2020)}} \cite{chen2020hopskipjumpattack} & \fullcirc & \emptycirc & \emptycirc & \fullcirc & \emptycirc & \emptycirc & --- & --- \\
\textbf{\textcolor{purple!80}{OARS (2023)}} \cite{feng2023stateful} & \fullcirc & \fullcirc & \halfcirc & \emptycirc & \fullcirc & \emptycirc & --- & --- \\
\midrule
\textbf{\textcolor{teal!80}{Adv. Training (2017)}} \cite{madry2017towards} & \fullcirc & \emptycirc & \emptycirc & --- & \emptycirc & \emptycirc & --- & ---\\
\textbf{\textcolor{teal!80}{Stateful (2020)}} \cite{chen2020stateful} & \fullcirc & \fullcirc & \emptycirc & \emptycirc & \fullcirc & \emptycirc & \emptycirc & \emptycirc \\
\textbf{\textcolor{teal!80}{Blacklight (2022)}}  \cite{li2022blacklight} & \fullcirc & \fullcirc & \emptycirc & \emptycirc & \fullcirc & \emptycirc & \fullcirc & \emptycirc \\
\midrule
\textbf{\textcolor{orange!80}{Our work}}  & \fullcirc & \fullcirc & \fullcirc & \fullcirc & \fullcirc & \fullcirc & \fullcirc & \fullcirc \\
\bottomrule
\label{tab:comparison}
\end{tabular}
\end{table*}

% the research gap
\subsection{Research Gap}
Prior work has focused on \textit{adapted} attacks, which incorporate general knowledge of any defenses, then empirically configured to evade it~\cite{carlini2017towards, chen2020hopskipjumpattack, brendel2018decision}.
Defenses also follow the same adapted paradigm of empirically defined and fixed operation~\cite{chen2020stateful, li2022blacklight}.
Our observation is that neither of them are formalized or performed in a fully adaptive manner, that is in response to how they influence their environment and with respect to other adaptive agents in it, with clear limitations when the latter is a given, e.g. in cybersecurity.
To bridge this gap, we provide a theoretical analysis and an empirical study of existing and novel methodologies adapting through direct interaction with their environment, denoting them as \textbf{self-adaptive}.

Our work builds on a long line of prior research that focuses on both sides of the competition between adversaries and defenses.
\textbf{Carlini and Wagner}~\cite{carlini2017towards} show that evaluating existing attacks out-of-the-box is insufficient and that adapted white-box attackers can break defensive distillation.
\textbf{Bose et~al.}~\cite{bose2020adversarial} propose Adversarial Examples Games, a zero-sum game between a white-box attacker and a local surrogate of the target model family.
At the equilibrium the attacker can generate adversarial examples that have a high success rate against models from the same family, constituting a zero-query, non-interactive approach for generating transferable adversarial examples.
\textbf{Pal et al.}~\cite{pal2020game} propose a game-theoretic framework for studying white-box attacks and defenses that occur in equilibrium.
\textbf{Feng et al.}~\cite{feng2023stateful} introduce \textbf{OARS}: adaptive versions of existing attacks that bypass \textbf{Blacklight}~\cite{li2022blacklight}, the state-of-the-art stateful defense.
% \textbf{Ren et al}.~\cite{ren2021unified} explain adversarial robustness through low and high-order interactions between features.
% \textbf{Yasodharan et al}.~\cite{yasodharan2019nonzero} study adversarial classification by formulating it as non-zero-sum hypothesis testing game.
%Chen et~al.~\cite{chen2020stateful} introduce a stateful defense that leverages the interactive behavior of decision-based attacks (like Boundary Attack~\cite{brendel2018decision}) to detect and reject their queries.
To function, OARS presupposes the rejection signal that a defense like Blacklight returns; a strong assumption that as we show in this work does not have to hold for stateful defenses.
As we demonstrate in~\autoref{sec:evaluation} and~\autoref{tab:result3}, Blacklight can be bypassed without assuming rejection, while the novel stateful defense we introduce can fully withstand the OARS adaptive attack.

As the most relevant and representative threat against real-world AI systems, in this work we scope on decision-based, interactive attacks and defenses.
We contribute a theoretical and practical framework for self-adaptation, under which the full extent of the offensive and thus also the defensive potential is properly assessed.
In the remainder of the paper the term \textbf{``adaptive''} subsumes adaptive control, and is used interchangeably with \textbf{``self-adaptive''}.
For what is conventionally known as adaptive evaluations in \ac{AML}, we use the term \textbf{``adapted''}.
To facilitate comparison, in~\autoref{tab:comparison} we highlight the most important aspects of our work as the synthesis of adaptive black-box attacks and defenses in a unified framework, and situate it with respect to prominent and state-of-the-art works in \ac{AML}.
Note the importance for an attack to function \textit{without} access to a rejection signal, and respectively for a defense to function \textit{without} access to query metadata like IP addresses or accounts.

%\begin{table}[t!]
%\centering
%\begin{tabular}{l|ccc|ccc}
% \multicolumn{1}{c}{} & \multicolumn{3}{c}{\textbf{Attacker}} & \multicolumn{3}{c}{\textbf{Defender}}  \\
%                 \cmidrule[.8pt]{2-4} \cmidrule[.8pt]{5-7}
%                & \rotatebox{90}{Adapted} & \rotatebox{90}{decision-based} &  \rotatebox{90}{Self-Adaptive} & \rotatebox{90}{Adapted} & \rotatebox{90}{Active} & \rotatebox{90}{Self-Adaptive}\\
%\midrule
%C\&W~\cite{carlini2017towards} &  \fullcirc  & \emptycirc  & \emptycirc & \emptycirc &  \emptycirc & \emptycirc \\ 
%AEG~\cite{bose2020adversarial} &  \fullcirc  & \emptycirc  & \emptycirc  & \emptycirc & \emptycirc & \emptycirc \\ 
% Boundary Attack~\cite{brendel2018decision} & \fullcirc & \fullcirc & \emptycirc & \emptycirc & \emptycirc & \emptycirc \\
%Stateful Defense~\cite{chen2020stateful} &  \fullcirc & \fullcirc & \emptycirc & \fullcirc & \fullcirc & \emptycirc \\   
%\midrule
%\rowcolor{teal!15}
%AMG (our work)  &  \fullcirc  & \fullcirc   & \fullcirc  & \fullcirc & \fullcirc  & \fullcirc  \\
%\bottomrule
%\end{tabular}\\
%\fullcirc \ Capability present %Fully achieved
%%\halfcirc \ Partially achieved
%\emptycirc \ Not present
%\caption{The research landscape of AML with respect to attacker and defender capabilities during model deployment (the focus of this work). The absence of a capability indicates that robustness evaluation is likely incomplete, as there are only partial security guarantees for deployed models.}
%\label{tab:comparison}

\section{Theoretical Framework}
\label{sec:approach}
In this section, we abstract through the individual properties of decision-based attacks and defenses to extract more general insights than a purely empirical study would render.
To investigate how robust real-world systems are to evasion, two related perspectives are crucial: a) resisting decision-based attacks, and b) adapting attacks and evasive capabilities \textit{together}.
When attacks (and defenses) are evaluated in a non-adaptive manner, in the expansive sense we outlined in \autoref{sub:adaptive}, results are unreliable~\cite{tramer2020adaptive, pintor2022indicators}.
Note that with offensive or defensive methodologies adapting, their environments become non-stationary~\cite{hernandez2017survey}, putting further pressure on the IID foundations that ML builds on.
To understand the implications of this adaptation, we perform an analysis of the possible interactions on the interface of an ML-based system, interactions that can be more generally considered as sequential zero-sum games~\cite{littman1994markov, hardt2016strategic, bose2020adversarial}.
In the following sections, introduced terms and notation are highlighted in \textcolor{orange}{orange}.

\subsection{Attacks}
The most compelling threat that deployed ML-based systems face are decision-based black-box attacks, where no access is assumed to the model or its parameters, only the capacity to submit queries and receive hard-label responses.
One of the first decision-based attacks was Boundary Attack~\cite{brendel2018decision}, followed by others that improve the overall performance, typically measured as the lowest perturbation achieved for the minimal amount of queries submitted.
Prominent examples are HSJA~\cite{chen2020hopskipjumpattack}, Guessing Smart (BAGS)~\cite{brunner2019guessing}, Sign-Opt~\cite{cheng2019sign}, Policy-driven (PDA)~\cite{yan2020policy}, QEBA~\cite{li2020qeba}, and SurFree~\cite{maho2021surfree}.

White-box attacks like C\&W~\cite{carlini2017towards} do not function in black-box environments, as there is no access to the inference pipeline.
To facilitate optimization, decision-based attacks commonly initialize from a sample belonging to the target class, as it can be considered an adversarial example with an unacceptably large perturbation.
This switch allows the task to be solved continuously, by minimizing the perturbation while always staying on the adversarial side of the boundary.
Decision-based attacks share further common aspects in their function that we can abstract through: given \textbf{starting} and \textbf{original} samples $\color{orange}x_g$ and $\color{orange}x_{c}$ respectively, the goal is to iteratively propose adversarial \textbf{candidates} $\color{orange}x_t$, until the \textbf{distance} $\color{orange}\delta = d(x_t, x_c)$ is minimized.
This process follows different algorithmic approaches that represent different geometrical intuitions; we can describe it more generally by means of a candidate generation policy:

\begin{equation}
    \pi_\theta^{\mathcal{A}} = P\: (x_{t}|x_g, x_{c}, p^{\mathcal{A}}, s^{\mathcal{A}}),
\label{eqn:genpolicy}
\end{equation}

\noindent that given $x_g$ and $x_{c}$, with $p^{\mathcal{A}}$ the \textbf{parameters} and $s^{\mathcal{A}}$ the \textbf{state} of the attack, generates a candidate $x_t$.
% Such parameters are the source and orthogonal steps for BAGS and the number of queries and radius for the gradient approximation in HSJA.
As attacks execute over discrete time steps, if we assume that the model always answers the attack procedure can be construed as a \ac{MDP} to be solved, by finding the parameters $\theta$ that minimize $\delta$ for a given number of queries.

%Scoping on image classification, we now introduce some necessary notation.
Consider now a multinomial image classification model $\mathcal{M}$ under attack, with a discriminant function $\color{orange}F: \mathbb{R}^d \rightarrow \mathbb{R}^m$, that for each input $x \in [0,1]^d$ generates an output $y := \{y \in [0,1]^m |\sum_{c=1}^{m}y_c = 1\}$ -- a probability distribution over the $m$ classes.
As black-box environments prevent access to these probabilities, one can only observe the decision of the classifier $C$ that returns the highest probability class:

\begin{equation}
    C(x) := \operatorname*{arg\,max}_{c \in [m]} F_c(x) = D(F_c(x))
\label{eqn:classifier} 
\end{equation}

\noindent with $D$ being the decision function, here $D = \operatorname*{arg\,max}$.
The goal in targeted attacks is to change the \textbf{decision} $\color{orange}c_g \in [m]$ for a correctly classified example $x$, to a predefined \textbf{target} class $\color{orange}c_o \neq c_g$.
This process can be facilitated through a function $\color{orange}\psi$ which given a perturbed example $x_t$ at step $t$, it returns a binary indicator of success:

\begin{equation}
    \psi(x_t) = \begin{cases}
                    +1 & \text{if}\quad C(x_t) = c_o\\
                    -1 & \text{if}\quad C(x_t) \neq c_o
                \end{cases}
\label{eqn:psi}
\end{equation}

As long as the model responds, $\psi$ can always be evaluated, and constitutes the fundamental mechanism upon which decision-based attacks build.
The adversarial goal can then be described as the following constrained optimization problem:

\begin{equation}
    \operatorname*{min}_{x_t} \: d(x_t,x_c) \quad \text{s.t.} \quad \psi(x_t) = 1,
\label{eqn:opt}
\end{equation}

\noindent where the distance metric $\color{orange}d$ is an $\ell_p$-norm, with $p \in \{0,1,2,\inf\}$.
As the threshold between adversarial and non-adversarial relies on the subjectivity of human perception, this highlights the indefinite nature of adversarial examples, further exemplified in domains where visual proximity is of little importance.
Successful or unsuccessful adversarial examples are therefore delimited by an \textbf{threshold} $\color{orange}\epsilon$ on perturbation, where $d(x,x_t) \leq \epsilon$.
% While this constraint might not intuitively translate to non-visual domains, we nonetheless contend that minimal perturbation still remains a desideratum of adversarial examples.

Real-world attacks being black-box does not make them less effective.
For instance, HSJA is guaranteed to converge to a stationary point of~Eq.~\eqref{eqn:opt}.
Given typical $\epsilon$ values for imperceptibility, this results in high attack success rates, even against \textit{adversarially trained} models. 
The limitations of adversarial training against decision-based attacks can be attributed to the out-of-distribution (OOD) nature of adversarial examples, and the saddle point optimization problem of Eq.~\eqref{eqn:adv_train} that make it difficult for algorithms to converge to a global solution.
Furthermore, incorporate decision-based attacks in training is not scalable as it can take orders of magnitude more steps (queries) to produce an adversarial example, than white-box attacks which take a few steps (1-50 in e.g. PGD~\cite{madry2017towards}).
% Additionally, decision-based attacks are challenging to incorporate during stochastic gradient descent (SGD) because these attacks rely on boundary navigation, and the farther the model is from convergence, the less effective the attack is.
% Scalability is another concern: while adversarial training typically involves a few steps (1-50) of white-box attacks (e.g. FGSM or PGD), decision-based attacks can take orders of magnitude more steps (queries) to produce an adversarial example.

Decision-based attacks search for the \textbf{optimal parameters} $\color{orange}\theta^*$ of the generation policy \eqref{eqn:genpolicy}, those that given $x^{i}_{c}$, with $i$ denoting the i-th adversarial episode, minimize Eq.~\eqref{eqn:opt} in expectation:

\begin{equation}
    \operatorname*{arg\,min}_{\theta} \mathbb{E}[\sum_{i=1}^{N}d(x^i_b,x^i_c)], \quad \text{s.t.} \quad \psi(x^i_b) = 1,
\label{eqn:rew}
\end{equation}

\noindent where $\color{orange}x^i_b$ is the \textbf{best} adversarial example generated by policy $\pi_\theta^{\mathcal{A}}$ during episode $i$.
Given the dimensionality of the input, it can be intractable to learn a policy that modifies the feature space directly~\cite{pierazzi2020intriguing}; CIFAR-10, for instance, has more than 3K features to perturb.
%decision-based attacks have also a second objective to minimize: the number of queries submitted.

In AI-based systems, the best practice is to freeze the model after validation so that no novel issues are introduced by retraining: for all queries $x_t$ submitted during an attack session, we can therefore assume that $F_0 = F_1 = ... = F_t, \forall t$.
While this is representative of real-world settings, it also enables adversaries to discover adversarial examples that were not identified beforehand.
% The fact however that adversaries follow a candidate generation policy, introduces a querying behavior which can be observed and utilized by a defensive methodology.
Consequently, while model-hardening through adversarial training is \emph{necessary}, it can also be \emph{insufficient} against decision-based attacks like HSJA.

% \begin{proposition}
% Let $F_c$ denote the discriminant function of an adversarially trained model $\mathcal{M}$. Then in order for $\mathbb{E}[\sum_{i=1}^{N}d(x^i_b,x^i_c)] \geq \epsilon$ in HSJA, two capabilities are necessary: a) a decision function $D' \neq \operatorname*{arg\,max}$, and b) additional context $\tau$ s.t for adversarial query $x_t$, $C(x_t) = D(F_c(x_t)) \neq D'(\tau, F_c(x_t))$.
% \label{prop:one}
% \end{proposition}

\begin{proposition}
Let $F_c$ denote the discriminant function of an adversarially trained model $\mathcal{M}$, and let $C(x) = D(F_c(x))$ denote its classifier. Then in HSJA, to satisfy $\mathbb{E}[\sum_{i=1}^N d(x_b^i, x_c^i)] \geq \epsilon$ it is necessary that: (a) $D \neq \operatorname*{arg\,max}$, and (b) context $\tau$ exists s.t. for some query $x_t$, $D(F_c(x_t)) \neq D'(\tau, F_c(x_t))$, where $D'$ is a stateful extension of $D$.
\label{prop:one}
\end{proposition}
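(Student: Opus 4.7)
The plan is to argue the contrapositive in two parts: show that if either condition (a) or condition (b) fails, HSJA drives the expected cumulative distance strictly below $\epsilon$, contradicting the premise. The whole argument hinges on HSJA's convergence theorem~\cite{chen2020hopskipjumpattack}, which guarantees that the iterates reach a stationary point of Eq.~\eqref{eqn:opt} whenever the per-query feedback $\psi$ in Eq.~\eqref{eqn:psi} is a deterministic, stateless oracle of the query.

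For the first case, suppose $D = \operatorname*{arg\,max}$, i.e., (a) fails. Under the frozen-model assumption $F_0 = F_1 = \ldots = F_t$ stated in the paper, the classifier $C$ and therefore $\psi$ are exactly such a deterministic, memoryless oracle; the HSJA convergence theorem applies verbatim and $d(x_b^i, x_c^i) \to 0$ as the per-episode query budget grows. Adversarial training only reshapes $F_c$ and thereby shifts the decision boundary, but it leaves $D$ untouched, and HSJA's geometric update rule is agnostic to where the boundary sits. Hence for any finite $N$ and a sufficient query budget, $\mathbb{E}[\sum_{i=1}^{N} d(x_b^i, x_c^i)]$ can be driven below $\epsilon$.

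For the second case, suppose $D(F_c(x_t)) = D'(\tau, F_c(x_t))$ for every context $\tau$ and query $x_t$, i.e., (b) fails. The stateful extension is then observationally identical to the stateless $D$, and even if $D \neq \operatorname*{arg\,max}$ (say, augmented with a rejection label), from the attacker's perspective $\psi$ is again a deterministic, memoryless function of the query. HSJA can then treat any extra label as an additional decision region and recover its boundary-crawling procedure, yielding the same convergence and again driving $\mathbb{E}[\sum d] < \epsilon$.

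The main obstacle I anticipate is the careful handling of the rejection case in (b): the published HSJA analysis assumes a binary $\pm 1$ signal, so extending convergence to a rejection-augmented oracle needs an argument that the attacker can still probe both sides of the effective adversarial boundary without becoming trapped inside a rejection region. Under the no-state hypothesis the rejection set is a fixed subset of input space, so a standard boundary-walking procedure with a modified step-acceptance rule (treating rejection as a non-adversarial label) should preserve the stationary-point guarantee. A secondary subtlety is ruling out that adversarial training alone invalidates convergence; this is handled by the paper's OOD observation, which ensures a non-empty adversarial region persists after robust training and thus leaves HSJA a boundary to traverse.
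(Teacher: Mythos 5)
Your Part 1 is essentially the paper's Part 1: both invoke the frozen-model assumption to make $\psi$ a deterministic oracle, apply HSJA's convergence guarantee to drive $\mathbb{E}[\sum_i d(x_b^i,x_c^i)]$ below $\epsilon$, and conclude by contradiction/contraposition that $D \neq \operatorname*{arg\,max}$ is necessary. That half is fine.

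The gap is in your handling of condition (b). You claim that if the stateful extension collapses to a stateless $D$, then even a non-$\operatorname*{arg\,max}$ $D$ leaves $\psi$ a deterministic memoryless oracle and HSJA "recovers its boundary-crawling procedure," again forcing $\mathbb{E}[\sum d] < \epsilon$. This is not true without an additional constraint you never invoke: a stateless $D$ is free to carve out arbitrary fixed regions of input space (in the degenerate limit, answer "not $c_0$" everywhere near the data manifold), in which case $\psi$ never returns $+1$ in a neighborhood of $x_c$, HSJA has no boundary to traverse there, and the expectation stays at the initial gap -- so necessity of (b) fails under your argument. You flag exactly this ("becoming trapped inside a rejection region") as an anticipated obstacle and then assert it away with an unproven "modified step-acceptance rule." The paper closes this hole differently: its Part 2 is not a convergence argument at all, but a clean-accuracy argument. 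It observes that a query $x_t$ that is part of an attack may be bitwise identical to a benign query $x_n$; any memoryless classifier must assign them the same label, so a stateless $D$ that deters the attack necessarily misclassifies the benign twin, violating the defender's requirement to preserve correct functionality on benign inputs. Only a context-dependent rule $C'(x,\tau) = D(\tau, F_c(x))$ can output different labels for $x_n = x_t$ under different histories $\tau_n \neq \tau_t$. To repair your proof you would need to import that benign-accuracy constraint (it is in the threat model, not in the HSJA analysis) and argue that any stateless $D$ respecting it cannot block the attack -- which is precisely the identical-query argument you are missing.
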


Intuitively, HSJA operates in 3 stages which repeat: a binary search that puts $x_t$ on the decision boundary, a gradient estimation step, and projection step along the estimated gradient.
If the model \emph{always} responds truthfully, the adversary will be able to accurately perform all these steps and converge to the optimal adversarial; without loss of generality, we can extend this intuition to other decision-based attacks which navigate the boundary.
Secondly, the model should be able to distinguish between two, otherwise identical, queries, when one is part of an attack and the other is not, a capability achievable through statefulness; see Appendix~\ref{apx:proofs} for the proof.

\subsection{Defenses}

Proposition~\ref{prop:one} suggests that alternative classification policies are necessary in the presence of decision-based attacks, e.g. classification with rejection or intentional misdirection.
Rejection has been realized in the form of conformal prediction, where model predictions are sets of classes including the empty one, or learning with rejection~\cite{barbero2022transcending, cortes2016learning}; while misdirection has emerged as a technique in adversarial RL and cybersecurity domains~\cite{gleave2020adversarial, sengupta2020multi}.
While adversarially training the discriminant function $F$ empirically shows some degree of robustness to decision-based attacks, the manner in which the model responds has a complementary potential.
The gap between the empirical and theoretically achievable robustness is the source for an active defense \textit{distinct} from model hardening.
Active defenses have direct implications on attacks themselves however.
Let us now assume an agent carrying out an \textbf{active defense policy}:

\begin{equation}
\pi_\phi^{\mathcal{D}} = P(\alpha_t|x_t,s^{\mathcal{D}}_t), \: \alpha \in \{0,1\}
\label{eqn:def}
\end{equation}

\noindent with $x_t$ the query, $s^{\mathcal{D}}_t$ the \textbf{state} for the defense as created by past queries, and $\color{orange}\boldsymbol{\alpha}$ the \textbf{binary decision}: for queries deemed adversarial, $\boldsymbol{\alpha}=1$, otherwise $\boldsymbol{\alpha}=0$.
When this policy is stationary, the environment dynamics become stationary in turn, thus besides the adversarial task itself, bypassing the defense can \emph{also} be formulated as an MDP to be solved (\autoref{fig:model}).
In two-player, zero-sum games, the moment an agent follows a stationary policy, it becomes \textit{exploitable} through the reward obtained by an adversary~\cite{timbers2022approximate}.
Active defenses, as consequence of decision-based attacks, entail therefore \textit{adaptive} adversaries.

\begin{proposition}
Against an active defense $\pi_\phi^{\mathcal{D}}$ and for time horizon $T$, a decision-based attack following a non-adaptive candidate generation policy $\pi_t = \pi_\theta^\mathcal{A}, \forall \: t \in [0,T]$ will perform worse in expectation \eqref{eqn:rew}, that is $\mathbb{E}[\sum_{i=1}^{N}d(x^i_b,x^i_c)]^{\mathcal{D}} > \mathbb{E}[\sum_{i=1}^{N}d(x^i_b,x^i_c)]^{\cancel{\mathcal{D}}}$.
\label{prop:two}
\end{proposition}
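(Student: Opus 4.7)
The plan is to compare, via a coupling argument over the same episodes $\{(x_c^i, x_g^i)\}_{i=1}^N$, the trajectories induced by the fixed non-adaptive policy $\pi_\theta^\mathcal{A}$ in two environments: the undefended one, where $\psi$ is always truthful, and the defended one, where the response reaching the attacker is altered whenever $\alpha_t = 1$. The central observation is that $\pi_\theta^\mathcal{A}$ is designed to optimise \eqref{eqn:rew} under the assumption that observations equal $\psi(x_t)$; any divergence between the observed signal and $\psi(x_t)$ feeds a biased state $s_t^\mathcal{A}$ into \eqref{eqn:genpolicy}, perturbing every subsequent candidate.

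I would proceed in three steps. First, formalise the observation sequence $O_t$ received by the attack. In the undefended environment $O_t = \psi(x_t)$, and by the convergence guarantee cited for HSJA-type boundary navigators (invoked in Proposition~\ref{prop:one}), the policy attains its designed expected minimum distance $\bar{\delta}^{\cancel{\mathcal{D}}}$. Second, characterise how $\pi_\phi^\mathcal{D}$ perturbs $O_t$: either as a rejection that consumes a query without returning useful information, or as a misdirection that alters the reported label. In either case, the oblivious $\pi_\theta^\mathcal{A}$ treats the altered response as ground truth, so the corrupted state $\tilde{s}_t^\mathcal{A}$ diverges from its undefended counterpart. Third, show that this divergence monotonically degrades progress on \eqref{eqn:opt}: rejections strictly shrink the effective budget $T$, while misdirections inject systematic bias into the binary-search, gradient-estimation, and projection sub-routines that structure $\pi_\theta^\mathcal{A}$, driving subsequent candidates away from the true boundary.

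The main obstacle is establishing \emph{strict} inequality rather than a non-strict one, which requires ruling out two pathological cases. The first is a vacuous defense; I would dispatch this with the mild non-triviality assumption that $\Pr[\alpha_t = 1] > 0$ for some $t \in [0,T]$ on a set of episodes of positive measure, since otherwise $\pi_\phi^\mathcal{D}$ is not a defense in any meaningful sense. The second, subtler case is a "lucky" misdirection that accidentally helps convergence. To rule this out, I would argue that $\pi_\theta^\mathcal{A}$ is tuned (and, per Proposition~\ref{prop:one}, provably tight) only with respect to truthful $\psi$; hence the truthful trajectory is a local optimum of \eqref{eqn:rew} over the class of signal-driven trajectories, and any biased-signal trajectory is, in expectation over the episode distribution, strictly worse. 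Summing over the $N$ episodes then yields the desired strict inequality $\mathbb{E}[\sum_{i=1}^N d(x_b^i, x_c^i)]^\mathcal{D} > \mathbb{E}[\sum_{i=1}^N d(x_b^i, x_c^i)]^{\cancel{\mathcal{D}}}$, establishing that an active defense induces strictly worse performance for any non-adaptive attack and thereby forcing the attacker into the adaptive regime pursued in the sequel.
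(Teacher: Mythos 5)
Your high-level framing — couple the two environments over the same episodes, note that the defense corrupts the observation signal $\psi(x_t)$ that the oblivious policy treats as ground truth, and then argue degradation — matches the spirit of the paper's argument, but the step that carries all the mathematical weight is asserted rather than proved. Your third step claims that the divergence between $\tilde{s}_t^{\mathcal{A}}$ and $s_t^{\mathcal{A}}$ ``monotonically degrades progress'' and that misdirection ``injects systematic bias'' into the attack's sub-routines; that is precisely the content of the proposition, and at the level of generality you work at (an arbitrary non-adaptive candidate generation policy) it is not obviously true — one can contrive signal-driven policies for which a particular corruption pattern accidentally accelerates convergence. Your attempt to exclude this via the claim that the truthful trajectory is a ``local optimum of \eqref{eqn:rew} over the class of signal-driven trajectories'' is unsupported and sits uneasily with the paper's own premise (Section~\ref{sub:adaptive}) that default attack parameters are typically \emph{sub}optimal; Proposition~\ref{prop:one} gives convergence of HSJA to a stationary point under truthful responses, not local optimality of the induced trajectory over perturbations of the observation channel, and even genuine local optimality would only rule out first-order improvement, not yield the strict expected inequality you need.

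The paper closes this gap by abandoning generality and arguing mechanism-by-mechanism for the two attacks actually evaluated. For BAGS, detection of adversarial queries lowers the adversarial ratio $\lambda_n$, which by the explicit formula $\epsilon = (1.3 - \min(\lambda_n,1))\cdot c$ shrinks the source step, and Eq.~\eqref{eqn:source} then gives $d(x_c, x_t^{\mathcal{D}}) > d(x_c, x_t^{\cancel{\mathcal{D}}})$ directly. For HSJA, it analyzes the Monte Carlo gradient estimate of Eq.~\eqref{eqn:nabla} as the detection ratio $\eta \to 1$: the flipped indicators drive the estimate toward an average of uniform sphere directions whose norm vanishes by the Law of Large Numbers, so the jump step stalls and the iterates converge prematurely. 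Your proof would become sound if you either restricted it to concrete update rules in the same way, or supplied an actual argument (not an appeal to tuning) for why an arbitrary signal-driven boundary-navigation policy cannot benefit in expectation from corrupted responses; as written, the degradation claim and the dismissal of the ``lucky misdirection'' case are gaps. Your non-triviality condition $\Pr[\alpha_t = 1] > 0$ is a correct and necessary hypothesis that the paper also invokes, albeit informally.
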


A proof for BAGS and HSJA is included in Appendix~\ref{apx:proofs}.
An adversary can reason, as a corollary to Proposition~\ref{prop:one}, that such defenses \emph{have to} be in place as it is suboptimal not too.
However, there is a second reason to consider adaptive attacks even in the absence of active defenses, as attack policies are often suboptimal with their default, empirically defined parameters.
Adapting attack policies is essentially the optimization of these parameters, and as an approach has proven very effective in other black-box or expensive-to-evaluate domains, like Neural Architecture Search and Data Augmentation~\cite{zoph2016neural, pham2021autodropout, tsingenopoulos2024train}.
Our results in~\autoref{sec:evaluation} further indicate the correspondence between adaptive and self-optimizing, showing that adaptive consistently outperform non-adaptive attacks, particularly against active defenses.

Consider now an active defense that is based on a similarity or conformal metric.
In the twofold meaning we introduced in \autoref{sub:adaptive}, adaptive attack implies the \textit{capability} to bypass a similarity based defense; adaptive control implies optimization instead, the active tuning of all the available tools to evade the defense \textit{and} minimize the perturbation~(cf. \autoref{fig:adaptivity}).
The updated adversarial objective then is to find the optimal policy that \emph{also} evades detection, and the way to achieve this is by adapting the candidate generation policy \eqref{eqn:genpolicy} itself.
Notably, and despite the black-box and discontinuous nature of the task, this optimization can be \emph{fully} gradient-based.
Decision-based attacks can recover \textbf{gradient-based} solutions to their objective, despite \emph{neither} the active defense \emph{nor} the model itself being accessible in closed-form.
For model $\mathcal{M}$, adversarial queries $x_t$, and active defense $\pi_\phi^{\mathcal{D}}$ making decisions $\alpha_t$, we can thus formulate the following:
% $\alpha_t = \pi_\phi^{\mathcal{D}}(x_t, s^{\mathcal{D}_t})$

\begin{proposition}[Adversarial Policy Gradient]
Given adversarial policy $\pi_\theta^\mathcal{A}$ \eqref{eqn:genpolicy} that generates episodes $\tau_i$ of queries $x_t$, and reward function $r(\tau_i) = \sum_{x_t \in \tau_i} (1 - \alpha_t)$, the optimal evasive policy ${\pi^\mathcal{A}_{\theta^*}}$ is obtained by gradient ascent on the policy's expected reward, $\nabla_\theta \mathbb{E}_{\pi_\theta^{\mathcal{A}}} [r(\tau_i)]$.
\label{th:epg}
\end{proposition}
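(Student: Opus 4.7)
The plan is to derive the statement as an instance of the REINFORCE / score-function estimator, where the key point is that $\theta$ parametrizes only the attacker's candidate generation policy $\pi_\theta^\mathcal{A}$, while everything else (the discriminant $F$, the decision $\alpha_t$ produced by $\pi_\phi^\mathcal{D}$, and the transition dynamics on the defense state $s^\mathcal{D}_t$) is independent of $\theta$. This is what allows us to obtain a usable gradient in a fully black-box setting despite $F$ and $\pi_\phi^\mathcal{D}$ being non-differentiable and inaccessible in closed form.

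First, I would write the expected reward as $J(\theta) = \mathbb{E}_{\tau_i \sim p_\theta}[r(\tau_i)] = \sum_{\tau_i} p_\theta(\tau_i)\, r(\tau_i)$, where $\tau_i = (x_0, x_1, \dots, x_T)$ is a rollout of the attack against the defended model. Using~\eqref{eqn:genpolicy} and~\eqref{eqn:def}, the trajectory probability factorizes as
\begin{equation}
p_\theta(\tau_i) = p(x_0) \prod_{t=0}^{T-1} \pi_\theta^\mathcal{A}(x_{t+1} \mid x_g, x_c, p^\mathcal{A}, s^\mathcal{A}_t)\, P(s^\mathcal{A}_{t+1}, s^\mathcal{D}_{t+1} \mid x_{t+1}, s^\mathcal{A}_t, s^\mathcal{D}_t),
\end{equation}
so that only the $\pi_\theta^\mathcal{A}$ factors carry a dependence on $\theta$. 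Next, I would apply the log-derivative identity $\nabla_\theta p_\theta(\tau_i) = p_\theta(\tau_i)\, \nabla_\theta \log p_\theta(\tau_i)$, so that $\nabla_\theta J(\theta) = \mathbb{E}_{\tau_i \sim p_\theta}[r(\tau_i)\, \nabla_\theta \log p_\theta(\tau_i)]$. After taking the logarithm the environment and starting-state factors become constants in $\theta$ and their gradients vanish, leaving
\begin{equation}
\nabla_\theta J(\theta) = \mathbb{E}_{\tau_i \sim \pi_\theta^\mathcal{A}}\!\left[ r(\tau_i) \sum_{t=0}^{T-1} \nabla_\theta \log \pi_\theta^\mathcal{A}(x_{t+1} \mid x_g, x_c, p^\mathcal{A}, s^\mathcal{A}_t) \right].
\end{equation}
Because $r(\tau_i) = \sum_{x_t \in \tau_i}(1 - \alpha_t)$ is observed at the attacker's interface from the hard-label/rejection channel queried through $\psi$ and the defense's decision $\alpha_t$, each trajectory's contribution is computable without any white-box access to $F$ or $\pi_\phi^\mathcal{D}$, and the expectation admits an unbiased Monte Carlo estimate by rolling out $\pi_\theta^\mathcal{A}$ against the black-box target.

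With the gradient in hand, the statement that the optimal evasive policy is obtained by gradient ascent follows from the standard stochastic approximation argument: iterating $\theta_{k+1} = \theta_k + \eta_k\, \widehat{\nabla}_\theta J(\theta_k)$ with a diminishing step-size schedule satisfying $\sum_k \eta_k = \infty$, $\sum_k \eta_k^2 < \infty$, and bounded-variance estimates of the policy gradient, converges to a stationary point $\theta^*$ of $J$; since $J$ is bounded above by $T$ (the maximum number of non-rejected queries), such stationary points exist, and under a differentiable policy parametrization this coincides with $\nabla_\theta J(\theta^*) = 0$, i.e.\ a locally optimal evasive policy $\pi_{\theta^*}^\mathcal{A}$.

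The main obstacle I anticipate is precisely the gap between \emph{a} stationary point and the \emph{global} optimum: the composition of $\pi_\theta^\mathcal{A}$ with the non-differentiable defense yields a non-convex and typically multi-modal $J$, so the claim of optimality must be qualified as local optimality (or global optimality up to the standard policy-gradient caveats). A secondary concern is variance of the score-function estimator, which I would address only in passing by noting that the usual variance-reduction devices (baselines, advantage estimation, or PPO-style clipping as the paper already uses) preserve unbiasedness of the ascent direction and do not affect the validity of the derivation above.
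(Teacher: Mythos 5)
Your proposal is correct and follows essentially the same route as the paper's proof: a REINFORCE/score-function derivation of $\nabla_\theta \mathbb{E}_{\pi_\theta^{\mathcal{A}}}[r(\tau_i)]$ followed by gradient ascent to $\theta^*$. Your version is in fact slightly more careful than the paper's --- you make explicit the trajectory factorization that kills the $\theta$-independent environment and defense terms, and you correctly qualify convergence as being to a stationary (locally optimal) point under Robbins--Monro step sizes, whereas the paper simply asserts convergence to \emph{the} optimal evasive policy via the observation that the reward is maximized when $\sum_t \alpha_t = 0$.
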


The proof is included in Appendix~\ref{apx:proofs}.
We thus have established that, \textbf{a)} in the presence of decision-based attacks, active defenses are necessary, yet conditional on adversarial agency they are insufficient and, \textbf{b)} adaptive attacks can become optimal in terms of both evasion and efficiency by observing and adapting to the discrete model decisions.
To complete the puzzle, the last piece is turning active defenses also adaptive.

\begin{corollary}
The active defense achieves its optimal $\pi_{\phi^*}^{\mathcal{D}}$~\eqref{eqn:def}, i.e. maximizing expectation $\mathbb{E}[\sum_{x_t \in \tau_i}P(\alpha_t|x_t,s^{\mathcal{D}}_t)]$, by adapting its policy against the optimal evasive policy ${\pi^\mathcal{A}_{\theta^*}}$.
\label{prop:3}
\end{corollary}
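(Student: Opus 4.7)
The plan is to frame this corollary as the defensive symmetric counterpart of Proposition~\ref{th:epg}, exploiting the zero-sum structure of the interaction between attack and defense policies. Let $r(\tau_i) = \sum_{x_t \in \tau_i}(1-\alpha_t)$ denote the attacker's reward as in Proposition~\ref{th:epg}. Then the defender's per-trajectory reward is $r'(\tau_i) = \sum_{x_t \in \tau_i}\alpha_t = |\tau_i| - r(\tau_i)$, so the two objectives sum to a constant along any trajectory, rendering the game constant-sum (equivalently, zero-sum after a trivial shift). Under the joint distribution induced by $\pi_\theta^{\mathcal{A}}$ and $\pi_\phi^{\mathcal{D}}$, the defender's expected return is precisely $\mathbb{E}[\sum_{x_t \in \tau_i}P(\alpha_t=1|x_t,s^{\mathcal{D}}_t)]$, matching the quantity in the statement.

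Next I would invoke the standard best-response characterization for such games. Given a fixed attacker $\pi_\theta^{\mathcal{A}}$, the defender's best response is any $\pi_\phi^{\mathcal{D}}$ maximizing its expected reward against that attacker. I would then apply the policy gradient theorem symmetrically to Proposition~\ref{th:epg}: since $\pi_\phi^{\mathcal{D}}$ parameterizes a Bernoulli decision per query, the gradient $\nabla_\phi \mathbb{E}_{\pi_\theta^{\mathcal{A}},\pi_\phi^{\mathcal{D}}}[r'(\tau_i)]$ is well defined, and ascent on $\phi$ converges to a (local) optimum. Substituting the optimal attacker $\pi^{\mathcal{A}}_{\theta^*}$, whose existence is guaranteed by Proposition~\ref{th:epg}, yields $\pi_{\phi^*}^{\mathcal{D}}$ as the best response specifically against the strongest adversary, which is precisely the claim.

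The remaining step is to justify why the defender must adapt specifically to $\pi^{\mathcal{A}}_{\theta^*}$ rather than to some arbitrary or fixed attacker. Here I would argue by the exploitability result already cited around Proposition~\ref{prop:two}: if the defender commits to a stationary best response against a suboptimal attacker, then by Proposition~\ref{th:epg} the adversary can improve by gradient ascent on $\theta$, shifting the trajectory distribution and strictly decreasing the defender's return. Hence only a mutual best response is a fixed point of the interaction; by the equilibrium characterization of two-player zero-sum games, the defender's side of that fixed point is $\pi_{\phi^*}^{\mathcal{D}} = \mathrm{BR}(\pi^{\mathcal{A}}_{\theta^*})$, which is what we wanted.

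The main obstacle I anticipate is a subtlety in convergence: policy gradient only guarantees a \emph{local} optimum, so strictly speaking the corollary establishes $\pi_{\phi^*}^{\mathcal{D}}$ as a best response rather than a global saddle point. I would address this by noting that the statement concerns the existence and characterization of the optimal defense, not its efficient computation, and that the same caveat implicitly qualifies Proposition~\ref{th:epg}. A secondary obstacle is that $s^{\mathcal{D}}_t$ depends on the attacker's entire query history, so the defender faces a non-stationary MDP whenever $\pi_\theta^{\mathcal{A}}$ drifts during training; freezing the attacker at $\pi^{\mathcal{A}}_{\theta^*}$ restores stationarity, which is in fact why the corollary demands adaptation \emph{against} the optimal evasive policy rather than concurrently with a moving one.
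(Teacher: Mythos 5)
Your proposal follows essentially the same route as the paper's proof: define the defender's trajectory reward as the zero-sum complement $\sum_{x_t\in\tau_i}\alpha_t$, freeze the adversary at $\pi^{\mathcal{A}}_{\theta^*}$, and run policy-gradient ascent on $\phi$ to reach the best response. Your additional remarks --- the exploitability argument for why the defender must target $\theta^*$ specifically, and the local-versus-global caveat on policy gradient --- go beyond what the paper writes but are consistent with it and arguably make the argument more complete.
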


\begin{proof}
Since the game is zero-sum, we may define the defensive policy reward $\rho$ on any trajectory $\tau_i=(x_1,\dots,x_T)$ as $\rho(\tau_i) = \sum_{x_t \in \tau_i} \alpha_t$.
Treating the adversary's policy $\pi_{\theta^*}^\mathcal{A}$ as fixed, we perform gradient ascent on the expected reward $J(\phi)\;=\;\mathbb{E}_{\tau_i\sim(\pi_{\theta^*}^{\mathcal A},\,\pi_{\phi}^{\mathcal D})}\bigl[\rho(\tau_i)\bigr]$.
Under standard smoothness assumptions, this converges to $\phi^*=\arg\max_\phi J(\phi)$, which is precisely the defender’s best response to $\pi_{\theta^*}^{\mathcal A}$.
%gradient-ascent on the expected reward $\mathbb{E}_{\pi^\mathcal{D}_\phi}[\rho(\tau_i)]$ will converge to the optimal defensive policy ${\pi^\mathcal{D}_{\phi^*}}$ under standard smoothness assumptions.
\end{proof}

\subsection{Adversarial Markov Games}
\label{sec:AMG}
By reasoning on both offensive and defensive capabilities, we highlight why one cannot consider them independently.
As adaptive attacks and defenses are logical consequences of each other, their composition forms a turn-taking competitive game.
A precise game-theoretic formulation requires full knowledge of the environment: its models, players and their utility functions, as well as the permitted interactions and the transition dynamics, something typically intractable in this and other cybersecurity settings.
Model-free methods however can learn optimal (offensive \textit{and} defensive) responses directly through interaction with their environment~\cite{sengupta2020multi, schulman2017proximal}, avoiding the need for explicit modeling or solving the NP-hard bi-level optimization problem of Eq.~\eqref{eqn:adv_train}~\cite{bruckner2011stackelberg}.

To that end, Turn-Taking Partially-Observable Markov Games (TT-POMGs), introduced by Greenwald et al.~\cite{greenwald2017solving}, generalize Extensive-Form Games (EFGs) that model non-cooperative, sequential decision-making games of imperfect and/or incomplete information.
TT-POMG is a suitable formalism for decision-based attacks and defenses, with the added benefit that it can be transformed into an equivalent belief state MDP, significantly simplifying its solution.

Prior work has explored the competition underlying adversarial example generation in no-box and white-box settings~\cite{bose2020adversarial, gao2022achieving}.
We instead focus on decision-based, interactive environments, with unknown but stationary dynamics: all other agents present are considered part of the environment and therefore fixed in their behavior.
By folding the strategies of other agents into the transition probabilities and the initial probability distribution of the game, an optimal policy computed in the resulting MDP will correspond to the best-response strategy in the original TT-POMG.
The congruence between TT-POMGs and MDPs has both theoretical \textit{and} practical value for securing AI-based systems: once adversarial agents and their capabilities are identified through rigorous threat modeling, the best-response strategy in the simulated environment yields the optimal defense.

\begin{figure}
    \centering
    \includegraphics[width=0.48\textwidth]{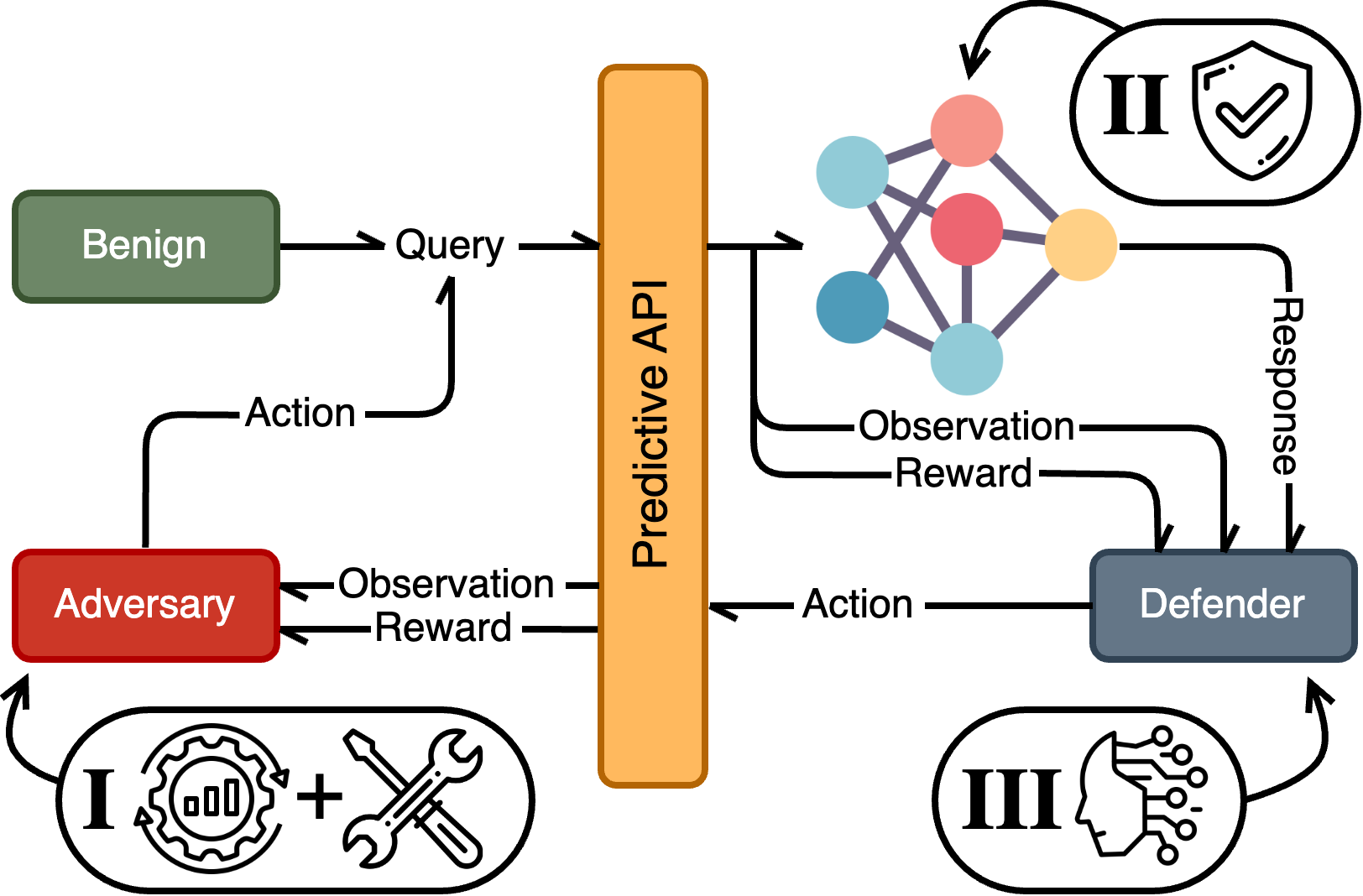}
    \caption{Schematic model of an AMG environment. Due to the inherent uncertainty of behavior at either side of the interface, it is a partially observable MDP, mirrored for each agent where one's decisions become the other's observations. (I) denotes an adaptive attacker (cf. Fig. \ref{fig:adaptivity}), (II) model hardening (passive defense), and (III) an active defense.}
    \label{fig:model}
\end{figure}

We describe the environment that encompasses adversarial attacks, adversarial defenses, and benign queries, as an Adversarial Markov Game (AMG) -- a special case of TT-POMG -- depicted in~\autoref{fig:model}.
Formally, we represent AMG as a tuple $\langle i, S, O, A, \tau, r, \gamma \rangle$
\begin{itemize}
    \item $i = \{\mathcal{D}, \mathcal{A}\}$ are the players, where $\mathcal{D}$ denotes the defender and $\mathcal{A}$ denotes the adversary. In our model, benign queries are modeled as moves by nature.
    \item $S$ is the full state space of the game, while $O = \{O^{\mathcal{D}}, O^{\mathcal{A}}\}$ are partial observations of the full state for each player.
    \item $A = \{A^{\mathcal{D}}, A^{\mathcal{A}}\}$ denotes the action set of each player.
    \item $\tau(s,a^{i},s')$ represents the transition probability to state $s' \in S$ after player $i$ chooses action $a^i$.
    \item $r = \{r^{\mathcal{D}}, r^{\mathcal{A}}\} : O^i \times A^i \rightarrow \mathbb{R}$ is the reward function where $r^i(s, a^i)$ is the reward of player $i$ if in state $s$ action $a^i$ is chosen.
    \item $\gamma^i \in [0,1)$ is the discount factor for player $i$.
\end{itemize}

The goal of each player $i$ is to determine a policy $\pi^{i}(A^i | O^i)$ that, given the policy of the other(s), maximizes their expected reward.
When a player employs a stationary policy, the AMG reduces to a belief-state MDP where the other interacts with a fixed environment.
The game is sequential and turn-taking, so each player $i$ chooses an action $a$ from their set of actions $A^i$ which subsequently influences the observations of others.
%In Chapter~\ref{sec:threat} and Appendix~\ref{apx:rew} we elaborate on the concrete definitions of states, actions, and rewards that we conducted our empirical evaluations with.

We have shown that an adaptive defense policy $\pi^{\mathcal{D}}_\phi$ is necessary to deter decision-based attacks, and that consequently the candidate generation policy $\pi^{\mathcal{A}}_\theta$ has to be also be adaptive.
As with plausible assumptions we cannot assume access to the exact state of the other agent, the states $O^{\mathcal{D}}$, $O^{\mathcal{A}}$ are partial observations of the complete state $\mathcal{S}$ of the full game.
For instance, when the competing agents (holding beliefs about each other) are human, they engage in recursive reasoning expressed as [I believe that [my opponent believes [that I believe...]]].
In the study of opponent modeling, considering other agent policies as a stationary part of the environment is equivalent to \textit{0th} level recursive reasoning: the agent models how the opponent behaves based on the observed history, but \emph{not} how the opponent \emph{would} act based on how the agent behaves~\cite{albrecht2018autonomous, wen2019probabilistic}.
In this work we consider more involved recursive reasoning out of scope, as AMGs can be solved by single-agent RL algorithms, and perform the empirical evaluation without building explicit models of opponent behavior.

\section{Threat Model}
\label{sec:threat}

The empirical study we conduct in Section~\ref{sec:evaluation} reflects diverse instantiations of the general theoretical framework introduced in Section~\ref{sec:approach}.
When working forward from the theoretical to the practical, concrete design choices have to be made when specifying the latter, choices that can have considerable influence on the results.
To elucidate our proposed robustness evaluation methodology, in this section we provide the concrete details on the threat model and the environment.

\textbf{[Threat Model].} Our AMG framework describes a two-player competitive game; while extensible to more players, in this work we assume that at a given moment only one attack takes place.
From the defensive perspective, incoming queries can be either benign or part of an attack.
An assumption that influences the effectiveness of stateful detection is that queries can be attributed to UIDs, e.g., an IP address or a user account.
However, adversaries can collude, create multiple accounts, use VPNs, or in fact accounts and IP addresses might not even be necessary to query the model.
To address this, we treat queries irrespective to their source.
This is a strictly more challenging setting for stateful defenses, where we operate solely on the content of queries and not on any other metadata, similar to~\cite{li2022blacklight}.
Unlike Blacklight however, instead of rejecting queries, something that in itself provides \textit{more} information to the adversary and thus facilitates evasion (cf. OARS \cite{feng2023stateful}, \autoref{tab:comparison}), we misdirect by returning the second highest probability class.
Furthermore, Gaussian noise is added to the benign queries to simulate a noisy channel and a shift in distribution, so that is not trivial for a defense to tell adversarial noise apart.
In summary, the black-box threat model we consider is delineated as follows:

\begin{itemize}[wide, labelindent=5pt, noitemsep, nolistsep]
    \item \textbf{Assets:} Trained and deployed model $\mathcal{M}$ with corresponding weights ${w}$.
    \item \textbf{Agents:} Adversary / Defender / Benign user.
    \item \textbf{Adversary Goal:} Generate minimal perturbation adversarial examples in as few queries as possible, while evading the defense.
    \item \textbf{Defender Goal:} Stop the adversary from generating adversarial examples, while preserving the correct functionality of the model $\mathcal{M}$ on benign users.
    \item \textbf{Adversary Knowledge:} The model $\mathcal{M}$ is known as the black-box function that transforms inputs $x \in [0,1]^d$ to outputs $c \in [m]$, $m$ being the number of classes. The weights ${w}$ and the closed-form expression of $\mathcal{M}$ are unknown, as unknown is if an active defense $\pi_\phi^{\mathcal{D}}$ is present or not.
    \item \textbf{Defender Knowledge:} The defender observes only the content of incoming queries, without knowing if they come from a benign user or the adversary.
    \item \textbf{Adversary Capabilities:} Adapt the parameters of the attack and of any evasive transformations; in essence, optimize the candidate generation policy $\pi^{\mathcal{A}}_\theta$.
    \item \textbf{Defender Capabilities:} For each query $x$, decide between answering truthfully with the actual prediction $C(x) = c_T$, or misdirect with the second highest probability class $c_S$.
\end{itemize}

\textbf{[Similarity].} Decision-based attacks typically follow a policy that generates successive queries: these exhibit degrees of similarity which can be quantified by an appropriate $L_p$ norm.
If that norm is computed on the original inputs however, an adversary can adapt by employing evasive transformations the model is invariant to and bypass the similarity detection (cf. \autoref{fig:adaptivity}).
To account for this capability, we train a Siamese network with contrastive loss in order to learn a latent space $\mathcal{L}(\cdot)$ where similar inputs are mapped close together, unaffected by added noise or transformations on the inputs.
For the stateful characterization of queries, we use two queues: one for the detected adversarial queries as determined by the defensive agent, and one for the benign and undetected ones.

\textbf{[Active defense].} Recall that decision-based attacks evaluate a Boolean-valued function to determine if the query is adversarial or not; a straightforward counter to this behavior is to misdirect by returning a decision different from the actual through a system of confinement.
%responding not with the actual class, i.e., the highest probability one, but the second highest.
When new query $x_t$ is received, a state is constructed based on $x_t$ and the queue $k_{-n}, k_{-n-1}, ..., k_{0}$ of known adversarial queries.
Based on this state, the defensive agent takes a single continuous action $\{\sigma \in \mathbb{R} \,|\, 0\leq \sigma \leq1\}$, with $\sigma$ being the radius of a hypersphere centered on the last known adversarial query $k_0$ in the latent space $\mathcal{L}$.
If $\|\mathcal{L}(x_t) - \mathcal{L}(k_0)\|_2 < \sigma$ the query is considered adversarial and is appended to the adversarial queue as the latest $k_0$.
%If the $l_2$ distance between the embeddings of $x_t$ and $c_0$, $d(\mathcal{L}(x_t), \mathcal{L}(c_0))$, is within this radius $a$, the query is considered adversarial and is appended to the adversarial queue as the latest $c_0$.
This system of confinement is depicted in ~\autoref{fig:boundary}.

\textbf{[Adaptivity].} No evaluation in \ac{AML} is complete without considering adaptive adversaries; a notion we expand in this work, that is with the instruments to bypass the defense \textit{and} their optimal configuration.
%that is without knowledge of the defense and its principle of operation, leaving out maybe the precisely \textit{how} to bypass it \cite{tramer2020adaptive}.
%Adaptive adversaries imply the invention of control:
As stateful defenses are so far similarity based, to bypass them intuition points towards input transformations the model is invariant to.
For a given query $x_t$ we want to compute a transformation $x_t' = T(x_t)$ so that $||x_t' - x_t||_2 \gg ||x_t - x_{t-1}||_2$ while $F(T(x_t)) \approx F(x_t)$.
Depending on magnitude and composition of transformations $T$, the identity $F(T(x_t)) = F(x_t)$ might not always hold.
As we also demonstrate in Section \ref{sec:evaluation}, $T$ interferes with the perturbations of the adversarial policy: the performance and evasiveness of an attack are thus in a natural trade-off.

At this point one should inquire what is the correct composition of transformations $T$ to apply.
When shall $T$ be applied, and how does it affect the attack fundamentals?
The transformations $T$ can be considered as a set of additional controls, and like attack parameters they themselves can be suboptimal out-of-the-box \cite{croce2020reliable}.
Thus the combined control of attack and evasion parameters is a \emph{prerequisite} to properly assess the strength of a defense.
% their empirical configuration is often suboptimal against a particular defense and a fine balance is often required between evasion and preserving attack performance.
Their trade-off illustrates why the twofold definition of adaptive is necessary in \ac{AML} evaluations: first to impart the tools to accomplish to the task through the definition of \textit{what} can be controlled, and then to find the precise optimal configuration and strategy of the attack.

\begin{figure}
    \centering
    \includegraphics[width=0.49\textwidth]{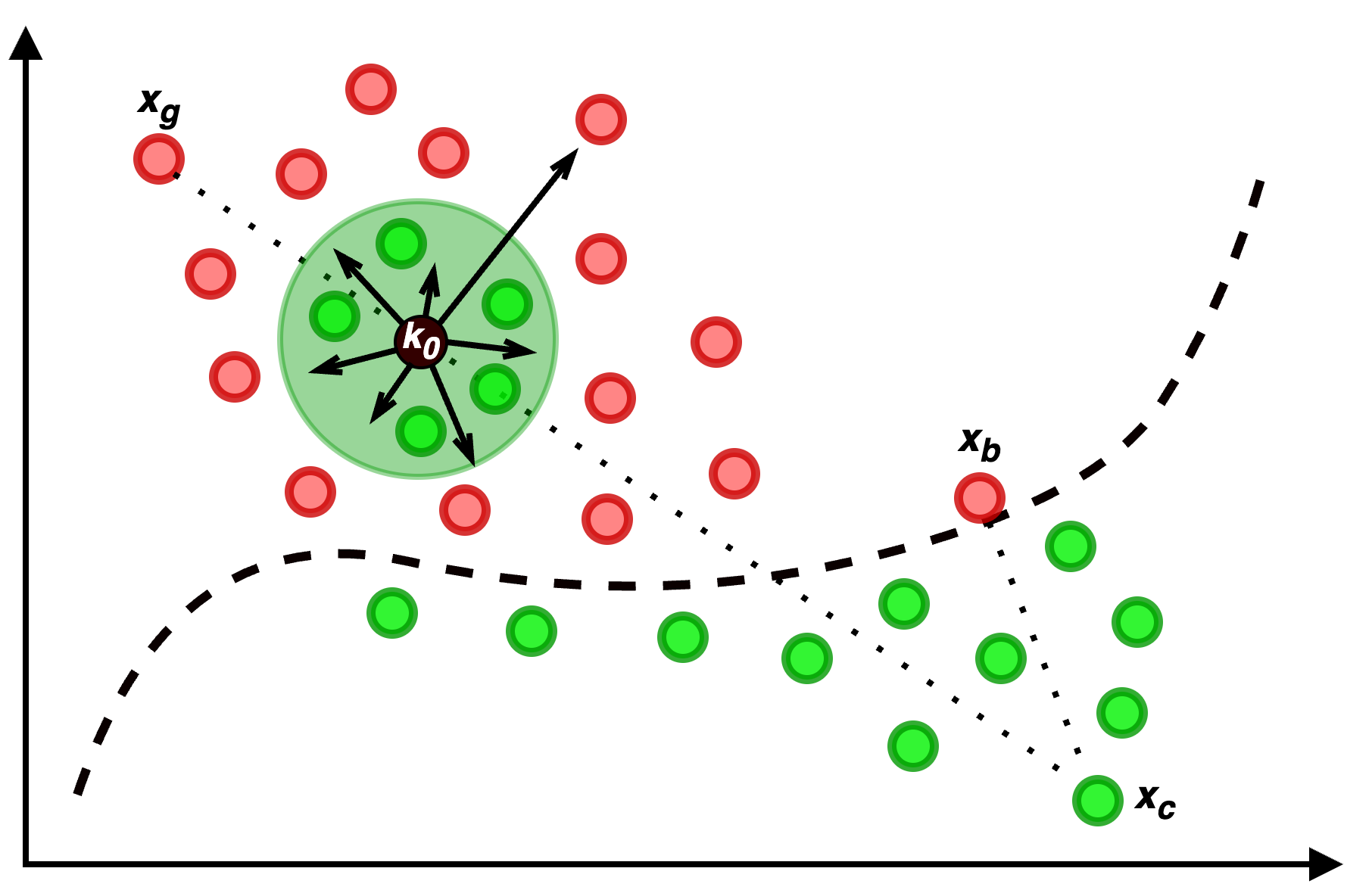}
    \caption{Misdirection in a hypothetical 2D decision boundary. The adaptive defense controls a single parameter, the hypersphere radius around $k_0$ (the last known adversarial); for queries $x_t$ that fall within this hypersphere the model responds with a non-adversarial decision. $x_g$ is the starting sample, $x_c$ the original, and $x_b$ the best possible adversarial.}
    \label{fig:boundary}
\end{figure}

\textbf{[Agents \& Environments].} Unlike common competitive games, in AMGs the two players have different action and state sets.
AMGs are also asymmetric in the playing cadence: while the defender plays every round, the adversary might wait one to several rounds; HSJA for example is controlled on the iteration rather on the query level.
Training is complicated further given that the experience upon which each agent learns arrives only \emph{after} the opponent moves.
We address these challenges by developing custom learning environments (with the OpenAI Gym and Stable-Baselines3 libraries) for asymmetrical agents, with delayed experience collection, and asynchronous training.
% ~\cite{brockman2016openai, raffin2019stable}.

\textbf{[States \& Actions]}. 
For the definition and the rationale behind the states we use, we point the reader to Appendix~\ref{apx:rew}.
For actions, we control BAGS through 4 parameters: orthogonal step size, source step size, mask bias, and Perlin bias.
HSJA is controlled by 3: the gradient estimation radius, the number of estimation queries, and the jump step size.
All evaluations start from controlling these attack parameters \textit{only}; if the active defense proves impossible to defeat, we introduce additional knobs that control the magnitude and probability of transformations on the input, with the goal to evade detection \emph{while} preserving semantic content and hence the correct classification.
The range of transformations we experimented with as well as their magnitude and probability are listed in~\autoref{tbl:transforms}.
Finally, in both BAGS and HSJA the active defense consists of an 1-dimensional continuous action that controls the radius of confinement $a$, as depicted in Fig~\ref{fig:boundary}.

\textbf{[Rewards]}. Success in an \ac{RL} task relies heavily on \emph{how} it is rewarded.
Engineering an effective reward function is non-trivial and hides intricacies, as reward hacking and specification gaming are common phenomena and the learned behavior can vary~\cite{amodei2016concrete}.
For adversaries, the rewards we experimented with are variations on minimizing the distance to the original example -- with extra reward shaping based on the fundamental operation of each attack -- while defenders are rewarded or penalized for intercepting adversarial or benign queries respectively.
% For BAGS, terms include the distance gains towards the original example (also relative to the initial gap) and on their frequency.
% For HSJA, additional terms include penalties for the amount of gradient estimation queries, and rewards on the precision of gradient estimation and jump steps.
% Defensive rewards for BAGS include keeping the best adversarial or the average query close to the starting point, minimizing the size of the step, and correctly intercepting queries.
% For HSJA, additional terms include penalties for not intercepting queries at the binary search or jump steps.
The rewards are described in Appendix~\ref{apx:rew}.
\section{Evaluation}
\label{sec:evaluation}
% As the empirical counterpart of our theoretical investigation, we perform an extensive evaluation by defining a range of scenarios that reflect all possible and realistic combinations between adversarial attacks and defenses.
For evaluation, we define a range of scenarios intended to reflect all possible and realistic combinations between adversarial attacks and defenses, and their adaptive versions.
Concretely, the research questions we want to evaluate are: 1) Are active defenses a necessary complement to model hardening and to what extent? 2) Are attacks more threatening when adaptive, i.e., do they outperform their vanilla versions \textit{and} evade active detection? 3) If yes, to what extend active defenses recoup their performance by also turning adaptive?

\textbf{[Metrics]}. We employ \textbf{ASR} (Attack Success Rate) and $\mathbf{L_2}$ \textbf{norm} of the perturbation.
For the former we set a fixed threshold of 3 for consistency between experiments, while the latter is a more fine-grained metric well suited for comparing baseline attacks, defenses, and their adaptive versions, as it is not based on an (arbitrary) perceptual threshold that can yield widely varying results when moved.
% and \textbf{4)} our \ac{RL} agents learn optimal policies (also) by using this distance as a signal, where the adversary is rewarded by minimizing it and the defender by maximizing it.
The budgets we evaluate over are 1K, 2K and 5K queries.
As robustness and classification accuracy are typically in trade-off, the third metric of interest is the benign sample accuracy (\textbf{Clean Acc.}) that the original model and the active defense achieve together.

\subsection{Evaluation Setup}
Our goal is to learn offensive or defensive policies that are \emph{general}: they transfer to \textit{any} other evasion task.
Thus after training and validating the agents, the final performance is reported on a fixed hold-out set of 100 adversarial episodes where the starting and original samples are selected at random.
As is best practice in \ac{AML}, candidate samples are only those that are correctly classified by the model.
For each scenario we perform a limited hyperparameter and reward function exploration (max 30 trials), with the intention to root out poor combinations rather than exhaust the search space, described in more detail in Appendix~\ref{apx:hyper}.

The black-box attacks we render adaptive and evaluate are \textbf{BAGS} and \textbf{HSJA}, as they represent two fundamentally different approaches, are highly effective, \emph{and} have the highest evasion potential \cite{li2022blacklight}.
BAGS is a stochastic, search-based method where every query submitted is a new and potentially better adversarial example.
Contrastively, HSJA is deterministic and composed of 3 different stages where the queries are generated in an aggregated manner: the vast majority of them are not candidate adversarial examples but means of approximating the gradient at the decision boundary.

In training and evaluation, the adversarial game is played as follows: the adversary starts by submitting a query, then the defender responds either \emph{truthfully} (the actual model prediction) or by \emph{misdirecting} (the second highest probability class).
Then the environment decides with chance $p$ if the adversary moves next, otherwise a benign query is drawn.
In either case, it is the defender's turn; during testing they are also oblivious to the nature of the query and know only the content.
All experiments are performed with $p=0.5$; we also evaluate our trained defense when no attack is present ($p=0$) in Appendix~\ref{baserate}.

\begin{table*}[!ht]
\small
\centering
  \caption{\small{ASR and mean $L_2$ perturbation for 1K, 2K, and 5K queries for CIFAR-10, against normally and adversarially trained models. Clean Acc. reports the accuracy on benign queries of the base model plus any defenses present; in the first two scenarios (no active defense) the baseline clean accuracy is reported. Yellow scenarios denote the baseline attack performance, while green and red denote defensive and offensive scenarios respectively. The asterisk denotes where input transformations were used for evasion.}}
  % The green-to-red heatmap is a renormalised range of 0 to 100, where 0 is perfect defense (the gap value) and 100 is perfect attack (lowest value for 5K queries)
  \begin{tabular}{c|c|rrrr|rrrr|rr}
    \toprule
      \multirow{3}{*}{\parbox{1cm}{\centering Adv.\\Trained}} & \multirow{3}{*}{Scenario} & \multicolumn{10}{c}{\textbf{CIFAR-10 Gap: 20.01}} \\
      & &
      \multicolumn{4}{c}{BAGS} &
      \multicolumn{4}{c}{HSJA} &
      \multicolumn{2}{c}{Clean Acc.} \\
      \cline{3-12}
      & & {1K} & {2K} & {5K} & {ASR} & {1K} & {2K} & {5K} & {ASR} & {BAGS} & {HSJA}  \\
      \toprule
    \multirow{9}{*}{\xmark} & \raggedright\textbf{\textcolor{orange!70}{\phantom{*}0: VA-ND}} & 8.27 & 7.86 & 7.26 & \textcolor{t5!100}{\textbf{5\%}} & 3.42 & 1.43 & 0.41 & \textcolor{t100!100}{\textbf{100\%}} & 91.69 & 91.69 \\
    & \raggedright\textbf{\textcolor{purple!70}{\phantom{*}1: AA-ND}} & 1.26 & 0.71 & 0.49 & \textcolor{t100!100}{\textbf{100\%}} & 3.14 & 1.31 & 0.39 & \textcolor{t100!100}{\textbf{100\%}} & 91.69 & 91.69 \\
    & \raggedright\textbf{\textcolor{teal!70}{\phantom{*}2: VA-VD}} & 15.27 & 15.26 & 15.20 & \textcolor{t0!100}{\textbf{0\%}} & 11.14 & 10.81 & 10.33 & \textcolor{t7!100}{\textbf{7\%}} & 91.68 & 91.68 \\
    & \raggedright\textbf{\textcolor{purple!70}{\phantom{*}3: AA-VD}} & 2.63 & 2.03 & 1.77 & \textcolor{t93!100}{\textbf{93\%}} & 5.68 & 3.61 & 2.12 & \textcolor{t85!100}{\textbf{85\%}} & 91.69 & 91.69 \\
    & \raggedright\textbf{\textcolor{teal!70}{\phantom{*}4: VA-AD}} & 20.01 & 20.01 & 20.00 & \textcolor{t0!100}{\textbf{0\%}} & 17.17 & 16.35 & 15.56 & \textcolor{t0!100}{\textbf{0\%}} & 91.60 & 91.50 \\
    & \raggedright\textbf{\textcolor{purple!70}{*5: AA-TD}} & 6.28 & 5.45 & 4.52 & \textcolor{t30!100}{\textbf{30\%}} & 13.19 & 11.82 & 10.69 & \textcolor{t2!100}{\textbf{2\%}} & 91.52 & 91.46 \\
    & \raggedright\textbf{\textcolor{teal!70}{*6: TA-AD}} & 19.52 & 19.40 & 18.95 & \textcolor{t0!100}{\textbf{0\%}} & 16.48 & 16.13 & 15.69 & \textcolor{t0!100}{\textbf{0\%}} & 91.38 & 91.62 \\
    & \raggedright\textbf{\textcolor{purple!70}{*7: AA-AD}} & 9.95 & 9.80 & 9.80 & \textcolor{t5!100}{\textbf{5\%}} & 10.30 & 9.04 & 7.55 & \textcolor{t23!100}{\textbf{23\%}} & 91.66 & 91.55 \\
    & \raggedright\textbf{\textcolor{teal!70}{*8: AA-AD}} & 19.85 & 19.85 & 19.85 & \textcolor{t0!100}{\textbf{0\%}} & 14.46 & 13.93 & 13.08 & \textcolor{t1!100}{\textbf{1\%}} & 91.69 & 91.37 \\
    \midrule
    \multirow{9}{*}{\cmark} & \raggedright\textbf{\textcolor{orange!70}{\phantom{*}0: VA-ND}} & 8.72 & 8.42 & 7.94 & \textcolor{t4!100}{\textbf{4\%}} & 3.73 & 1.74 & 0.75 & \textcolor{t100!100}{\textbf{100\%}} & 87.76 & 87.76 \\
    & \raggedright\textbf{\textcolor{purple!70}{\phantom{*}1: AA-ND}} & 1.74 & 1.13 & 0.79 & \textcolor{t100!100}{\textbf{100\%}} & 3.64 & 1.77 & 0.73 & \textcolor{t100!100}{\textbf{100\%}} & 87.76 & 87.76 \\
    & \raggedright\textbf{\textcolor{teal!70}{\phantom{*}2: VA-VD}} & 15.42 & 15.35 & 15.20 & \textcolor{t0!100}{\textbf{0\%}} & 11.10 & 10.73 & 10.38 & \textcolor{t4!100}{\textbf{4\%}} & 87.72 & 87.73 \\
    & \raggedright\textbf{\textcolor{purple!70}{\phantom{*}3: AA-VD}} & 2.82 & 2.26 & 2.06 & \textcolor{t81!100}{\textbf{81\%}} & 5.66 & 3.36 & 1.94 & \textcolor{t86!100}{\textbf{86\%}} & 87.74 & 87.74 \\
    & \raggedright\textbf{\textcolor{teal!70}{\phantom{*}4: VA-AD}} & 20.01 & 20.01 & 20.00 & \textcolor{t0!100}{\textbf{0\%}} & 17.06 & 16.40 & 15.81 & \textcolor{t0!100}{\textbf{0\%}} & 87.66 & 87.66 \\
    & \raggedright\textbf{\textcolor{purple!70}{*5: AA-TD}} & 8.48 & 7.68 & 6.82 & \textcolor{t9!100}{\textbf{9\%}} & 13.59 & 12.65 & 11.39 & \textcolor{t1!100}{\textbf{1\%}} & 87.58 & 87.52 \\
    & \raggedright\textbf{\textcolor{teal!70}{*6: TA-AD}} & 19.58 & 19.40 & 18.95 & \textcolor{t0!100}{\textbf{0\%}} & 16.60 & 16.26 & 15.99 & \textcolor{t0!100}{\textbf{0\%}} & 87.50 & 87.68 \\
    & \raggedright\textbf{\textcolor{purple!70}{*7: AA-AD}} & 10.43 & 10.24 & 10.17 & \textcolor{t1!100}{\textbf{1\%}} & 10.21 & 9.22 & 7.82 & \textcolor{t12!100}{\textbf{12\%}} & 87.73 & 87.61 \\
    & \raggedright\textbf{\textcolor{teal!70}{*8: AA-AD}} & 19.86 & 19.86 & 19.86 & \textcolor{t0!100}{\textbf{0\%}} & 15.71 & 15.35 & 14.30 & \textcolor{t1!100}{\textbf{1\%}} & 87.67 & 87.40 \\
    \bottomrule
  \end{tabular}
  \label{tab:result2}
\end{table*}

%For a wide and representative evaluation, we construct such scenarios for all possible combinations of (non-) adaptive attacks and defenses.
The scenarios for all possible combinations of (non-) adaptive attacks and defenses are repeated over two datasets -- CIFAR10 and MNIST -- and over two models with the same architecture but different training regimes: with and without adversarial training.
As the transition from single to multi-agent \ac{RL} introduces non-stationarity, we approach the AMG as a belief-state MDP (relaxing the requirement of knowing the exact opponent policies), and use PPO \cite{schulman2017proximal} agents to learn optimal policies that will also constitute best responses for the full game~\cite{wen2019probabilistic}.
Note that learning independently of other agency breaks the theoretical guarantees of convergence~\cite{tuyls2012multiagent}, eg. in scenarios 7 \& 8 where both agents learn simultaneously.
Coloring denotes the learning/evaluated agent in each scenario, with their complete list being as follows:

\begin{enumerate}[leftmargin=*]
\setcounter{enumi}{-1} 
    \setlength\itemsep{0.1em}
    \item \textbf{VA-ND} -- \textcolor{orange}{Vanilla Attack} / No Defense: Baseline performance of attacks (BAGS \& HSJA) out-of-the-box, without any active defense.
    \item \textbf{AA-ND} -- \textcolor{purple}{Adaptive Attack} / No Defense: How much more optimal is the adaptive version compared to the baseline attack.
    \item \textbf{VA-VD} -- Vanilla Attack / \textcolor{teal}{Vanilla Defense}: The performance of our active defense, the non-adaptive version that has an empirically defined detection threshold.
    \item \textbf{AA-VD} -- \textcolor{purple}{Adaptive Attack} / Vanilla Defense: Similar to scenario (2), but now the attack is adaptive.
    \item \textbf{VA-AD} -- Vanilla Attack / \textcolor{teal}{Adaptive Defense}: The first scenario where the active defense is also adaptive, against the baseline adversary.
    \item \textbf{AA-TD} -- \textcolor{purple}{Adaptive Attack} / Trained Defense: After the adaptive defense is optimized, its policy is fixed and an adaptive attack is trained against it.
    \item \textbf{TA-AD} -- Trained Attack / \textcolor{teal}{Adaptive Defense}: The best policy found in the previous scenario is fixed and an adaptive defense is trained against it.
    \item \textbf{AA-AD} -- \textcolor{purple}{Adaptive Attack} / \textcolor{teal}{Adaptive Defense}: The first scenario where both agents learn simultaneously, making the environment non-stationary. In practice, the convergence will vary and depend on the chosen hyperparameters and rewards. Here we report the best-case for the attack.
    \item \textbf{AA-AD} -- \textcolor{purple}{Adaptive Attack} / \textcolor{teal}{Adaptive Defense}: The exact setup as scenario 7, but the best-case for the defense is reported instead.
\end{enumerate}

%\fabio{The implementation of \textit{adaptivity} here remains a bit unclear. Could you add a bit more details on how you are making things adaptive, from a progammatical perpsective? I guess you are using RL agents, so it may be good for the reader to know}

In each successive scenario, we evaluate using the most successful past policy, following standard practice in Markov Games: the worst-case opponent policy is fixed, and a best response to it is learned~\cite{littman1994markov, timbers2022approximate}.
Fixing other policies when computing a best response stabilizes learning in multi-agent environments, as it simplifies the problem to a single-agent setting -- one that, as discussed in Section~\ref{sec:AMG}, can be solved with standard \ac{RL}.

\textbf{Comparison to SotA.} In Scenarios 0-8 we evaluate all possible combinations between (adaptive) attack and defenses.
As a baseline to compare to, we additionally evaluate our approach to the state-of-the-art stateful defenses and adaptive attacks, that is Blacklight \cite{li2022blacklight} and OARS \cite{feng2023stateful} respectively.
We implement both Blacklight and OARS in our interactive environments by using their publicly available code and parameters.
As our environments do not return a rejection signal and to make a fair comparison, for OARS rejection coincides with a non-adversarial decision.
We thus define 5 further scenarios:

\begin{enumerate}[leftmargin=*]
\setcounter{enumi}{8} 
    \setlength\itemsep{0.1em}
    \item \textbf{VA-BD} -- \textcolor{orange}{Vanilla Attack} / Blacklight Defense: Baseline performance of the attacks against Blacklight.
    \item \textbf{OA-BD} -- \textcolor{purple}{OARS Attack} / Blacklight Defense: OARS against Blacklight.
    \item \textbf{AA-BD} -- \textcolor{purple}{Adaptive Attack} / Blacklight Defense: Our adaptive attack against Blacklight.
    \item \textbf{OA-TD} -- OARS Attack / \textcolor{teal}{Trained Defense}: OARS against our trained defense from Scenario 6.
    \item \textbf{OA-AD} -- OARS Attack / \textcolor{teal}{Adaptive Defense}: Our adaptive defense retrained against OARS.
\end{enumerate}

Our experiments were run on multiple machines, yet to give an idea for the time complexity of our defense, on an Intel i7-7700 CPU one forward pass in CIFAR -- that is one response to one query -- takes $8 \pm 1.4$ ms for $\sim$700 MFLOPs.

\subsection{Results}
For consistency and comparability between evaluations, all results are from the \emph{same} 100 test episodes.
The \textbf{gap} value denotes the $L_2$ perturbation that initially separates the starting and the original samples, averaged over the 100 episodes. 
% A perfect defense would therefore acquire a value very close to the gap, while a perfect attack would never be able to reduce the perturbation to $0$.
%it can be evaluated, however, relative to the baseline or other attacks.
By testing the trained agents on budgets higher than 5K we discovered that the trend in reducing $L_2$ holds; to make the agent training tractable and the evaluation wider however, we limit the maximum query budget per adversarial episode to 5K.
~\autoref{tab:result2} \& \autoref{tab:result3} report the results for CIFAR10, while ~\autoref{tab:result1} reports MNIST.
% The color gradient ranges from the gap value to the minimum value that can be achieved @ 5K queries.
%as yielded by months of CPU hours and an inordinate amount of trips to the whiteboard, 
The closer examination of the empirical results help us extract and highlight several important insights, practical observations, and general implications for the broader \ac{AML} field:

\begin{table*}[!ht]
\small
\centering
  \caption{\small{ASR and mean $L_2$ perturbation for CIFAR-10, comparing our Adaptive Attack (AA) and Adaptive Defense (AD) to Blacklight (BD) and OARS (OA).}}
  \begin{tabular}{c|c|rrrr|rrrr|rr}
    \toprule
      \multirow{3}{*}{\parbox{1cm}{\centering Adv.\\Trained}} & \multirow{3}{*}{Scenario} & \multicolumn{10}{c}{\textbf{CIFAR-10 Gap: 20.01}} \\
      & &
      \multicolumn{4}{c}{BAGS} &
      \multicolumn{4}{c}{HSJA} &
      \multicolumn{2}{c}{Clean Acc.} \\
      \cline{3-12}
      & & {1K} & {2K} & {5K} & {ASR} & {1K} & {2K} & {5K} & {ASR} & {BAGS} & {HSJA}  \\
      \toprule
    \multirow{5}{*}{\xmark} & \raggedright\textbf{\textcolor{orange!70}{\phantom{*}9: VA-BD}} & 9.55 & 9.32 & 9.17 & \textcolor{t0!100}{\textbf{0\%}} & 8.41 & 8.19 & 7.80 & \textcolor{t15!100}{\textbf{15\%}} & 91.71 & 91.71 \\
    & \raggedright\textbf{\textcolor{purple!70}{10: OA-BD}} & 9.46 & 9.46 & 9.46 & \textcolor{t1!100}{\textbf{1\%}} & 6.54 & 5.83 & 4.67 & \textcolor{t50!100}{\textbf{50\%}} & 91.71 & 91.71 \\
    & \raggedright\textbf{\textcolor{purple!70}{11: AA-BD}} & 2.26 & 1.39 & 1.32 & \textcolor{t98!100}{\textbf{98\%}} & 4.55 & 3.08 & 2.44 & \textcolor{t78!100}{\textbf{78\%}} & 91.71 & 91.71 \\
    & \raggedright\textbf{\textcolor{teal!70}{12: OA-TD}} & 20.01 & 20.01 & 20.01 & \textcolor{t0!100}{\textbf{0\%}} & 7.07 & 6.38 & 5.53 & \textcolor{t50!100}{\textbf{50\%}} & 91.61 & 91.59 \\
    & \raggedright\textbf{\textcolor{teal!70}{13: OA-AD}} & 20.01 & 20.01 & 20.01 & \textcolor{t0!100}{\textbf{0\%}} & 11.03 & 11.00 & 10.95 & \textcolor{t5!100}{\textbf{5\%}} & 91.61 & 91.69 \\
    \midrule
    \multirow{5}{*}{\cmark} & \raggedright\textbf{\textcolor{orange!70}{\phantom{*}9: VA-BD}} & 9.75 & 9.56 & 9.46 & \textcolor{t0!100}{\textbf{0\%}} & 8.67 & 8.50 & 8.28 & \textcolor{t7!100}{\textbf{7\%}} & 87.76 & 87.76 \\
    & \raggedright\textbf{\textcolor{purple!70}{10: OA-BD}} & 9.79 & 9.79 & 9.79 & \textcolor{t1!100}{\textbf{1\%}} & 5.77 & 4.53 & 3.26 & \textcolor{t72!100}{\textbf{72\%}} & 87.76 & 87.76 \\
    & \raggedright\textbf{\textcolor{purple!70}{11: AA-BD}} & 5.59 & 4.04 & 2.55 & \textcolor{t79!100}{\textbf{79\%}} & 5.59 & 4.04 & 2.55 & \textcolor{t79!100}{\textbf{79\%}} & 87.76 & 87.76 \\
    & \raggedright\textbf{\textcolor{teal!70}{12: OA-TD}} & 20.01 & 20.01 & 20.01 & \textcolor{t0!100}{\textbf{0\%}} & 6.44 & 5.49 & 4.38 & \textcolor{t65!100}{\textbf{65\%}} & 87.66 & 87.64 \\
    & \raggedright\textbf{\textcolor{teal!70}{13: OA-AD}} & 20.01 & 20.01 & 20.01 & \textcolor{t0!100}{\textbf{0\%}} & 11.31 & 11.12 & 10.97 & \textcolor{t7!100}{\textbf{7\%}} & 87.66 & 87.74 \\
    \bottomrule
  \end{tabular}
  \label{tab:result3}
\end{table*}

\begin{itemize}[wide, labelindent=5pt, noitemsep, nolistsep]
    \setlength\itemsep{0.35em}
    \item The initial performance of an attack can be misleading: out-of-the-box HSJA appears to be the better attack, but it is often outperformed by adaptive BAGS, especially in CIFAR and against active defenses.
    \item The performance of both attacks deteriorates considerably against active defenses (VD), however the defenses reach their full potential only when \emph{also adaptive} (AD).
    \item Our adaptive defense (AD) outperforms both Blacklight (BD) and non-adaptive (VD) defenses, also when transferred (cf. Sc.12). Compared to Blacklight, it reduces ASR by $\sim$90\% (in HSJA) when trained against OARS, while it offers similar protection when transferred from another attack.
    \item Overall, against the strongest attacks and in the worst case (Sc.7) for it, our defense contains the ASR in the range of $1-36\%$.
    \item Our adaptive attack (AA) outperforms OARS (Sc.10 \& 11) and vanilla attacks (VA) by a wide margin, \textit{without} access to rejection sampling, and irrespective of the defense it faces; the one exception is our adaptive defense (AD), against which it has very limited success.
    \item The advantage of adaptive attacks is more pronounced against active defenses, where they significantly outperform non-adaptive versions, cf. Sc.0\textrightarrow1, Sc.2\textrightarrow3.
    \item When comparing the upper and lower halves of each table, we can observe that adversarial training adds a limited amount of robustness; otherwise, \emph{the practical effect of adversarial training is a tax on the attacker}, forcing them to expend more queries for the same perturbation or having higher perturbation for the same query budget.
    \item Evasive transformations interfere with the attack policy, as illustrated by the difference between BAGS and HSJA in Sc.5. For an attack to reach its full potential, these two should be adaptively controlled together.
    \item In Sc.1 to 8, agents train against the best opponent policy as previously discovered, and ASR oscillates since following a fixed policy enables the learning of an optimal counter to it. Over successive adaptations, ASR eventually plateaus, indicating an equilibrium for each specific dataset and attack combination. This is illustrated in \autoref{fig:plot}.
    \item The first time active defenses resisted adaptive attacks was in Sc.5 of CIFAR; we employed evasive transformations from then onward.
    \item Different attack fundamentals respond differently to active defenses; the gradient estimation stage of HSJA has a disadvantage against similarity detection, while the jump and binary stages have an advantage.
    \item For HSJA, engineering a state the adaptive defense could learn on, merely by leveraging our knowledge of the attack and its geometric functioning, proved impossible. What did prove effective however, was pure computation\footnote{This is reminiscent of Sutton's Bitter Lesson~\cite{sutton2019bitter}, the observation that progress in AI is often driven by gains in computation rather than problem-specific expert knowledge.}: we used Contrastive Learning~\cite{hadsell2006dimensionality} to learn an embedding from raw queries, then used as state that transfers exceedingly well to other attacks like BAGS.
    % \item Evasive transformations combine differently with different attacks. While BAGS recoups some of its performance, HSJA struggles as the correct gradient information is gradually lost, something that had largely unsatisfactory solutions even with full \emph{white-box} access to the model and to the true gradient.
\end{itemize}

\begin{table*}
\small
\centering
  \caption{\small{ASR and mean $L_2$ perturbation for 1K, 2K, and 5K queries and accuracy on clean data for MNIST. The evaluation scenarios are identical to Tables~\ref{tab:result2} and~\ref{tab:result3}.}}
  \begin{tabular}{c|c|rrrr|rrrr|rr}
    \toprule
      \multirow{3}{*}{\parbox{1cm}{\centering Adv.\\Trained}} & \multirow{3}{*}{Scenario} &
      \multicolumn{8}{c}{\textbf{MNIST Gap = 10.62}} \\
      & &
      \multicolumn{4}{c}{BAGS} &
      \multicolumn{4}{c}{HSJA} &
      \multicolumn{2}{c}{Clean Acc.} \\
      \cline{3-12}
      & & {1K} & {2K} & {5K} & {ASR} & {1K} & {2K} & {5K} & {ASR} & {BAGS} & {HSJA}  \\
      \toprule
    \multirow{14}{*}{\xmark} & \raggedright\textbf{\textcolor{orange!70}{\phantom{*}0: VA-ND}} & 5.30 &  5.28 & 5.26 & \textcolor{t3!100}{\textbf{3\%}} & 3.59 & 3.07 & 2.61 & \textcolor{t73!100}{\textbf{73\%}} & 99.37 & 99.37\\
    & \raggedright\textbf{\textcolor{purple!70}{\phantom{*}1: AA-ND}} & 2.74 & 2.57 & 2.47 & \textcolor{t78!100}{\textbf{78\%}} & 3.61 & 3.09 & 2.60 & \textcolor{t74!100}{\textbf{74\%}} & 99.37 & 99.37\\
    & \raggedright\textbf{\textcolor{teal!70}{\phantom{*}2: VA-VD}} & 7.44 & 6.66 & 5.63 & \textcolor{t22!100}{\textbf{22\%}} & 5.82 & 5.78 & 5.73 & \textcolor{t2!100}{\textbf{2\%}} & 99.34 & 99.20\\
    & \raggedright\textbf{\textcolor{purple!70}{\phantom{*}3: AA-VD}} & 3.79 & 3.66 & 3.44 & \textcolor{t29!100}{\textbf{29\%}} & 3.54 & 3.09 & 2.77 & \textcolor{t61!100}{\textbf{61\%}} & 99.37 & 99.31\\
    & \raggedright\textbf{\textcolor{teal!70}{\phantom{*}4: VA-AD}} & 10.57 & 10.57 & 10.57 & \textcolor{t0!100}{\textbf{0\%}} & 10.05 & 10.05 & 10.05 & \textcolor{t0!100}{\textbf{0\%}} & 99.31 & 99.30 \\
    & \raggedright\textbf{\textcolor{purple!70}{\phantom{*}5: AA-TD}} &  3.57 &  3.29 &  3.14 & \textcolor{t39!100}{\textbf{39\%}} & 5.00 &  3.97 &  3.38 & \textcolor{t36!100}{\textbf{36\%}} & 99.32 & 98.84 \\
    & \raggedright\textbf{\textcolor{teal!70}{\phantom{*}6: TA-AD}} & 10.62 & 10.62 & 10.62 & \textcolor{t0!100}{\textbf{0\%}} & 10.23 & 10.23 & 10.18 & \textcolor{t0!100}{\textbf{0\%}} & 99.28 & 99.34\\
    & \raggedright\textbf{\textcolor{purple!70}{\phantom{*}7: AA-AD}} &  4.89 &  4.89 &  4.86 & \textcolor{t8!100}{\textbf{8\%}} & 5.06 & 4.76 &  4.38 & \textcolor{t36!100}{\textbf{36\%}} & 99.31 & 99.35\\
    & \raggedright\textbf{\textcolor{teal!70}{\phantom{*}8: AA-AD}} & 10.62 & 10.62 & 10.62 & \textcolor{t0!100}{\textbf{0\%}} &  10.21 &  10.21 &  10.21 & \textcolor{t0!100}{\textbf{0\%}} & 99.32 & 99.23\\
    \cline{2-12}
    & \raggedright\textbf{\textcolor{orange!70}{\phantom{*}9: VA-BD}} & 10.62 & 10.62 & 10.62 & \textcolor{t0!100}{\textbf{0\%}} &  5.65 & 5.65 & 5.65 & \textcolor{t2!100}{\textbf{2\%}} & 99.37 & 99.37 \\
    & \raggedright\textbf{\textcolor{purple!70}{10: OA-BD}} & 10.62 & 10.62 & 10.62 & \textcolor{t0!100}{\textbf{0\%}} & 4.53 & 4.00 & 3.15 & \textcolor{t46!100}{\textbf{46\%}} & 99.37 & 99.37 \\
    & \raggedright\textbf{\textcolor{purple!70}{11: AA-BD}} & 3.83 & 3.69 & 3.60 & \textcolor{t17!100}{\textbf{17\%}} & 4.18 & 3.66 & 3.19 & \textcolor{t52!100}{\textbf{52\%}} & 99.37 & 99.37 \\
    & \raggedright\textbf{\textcolor{teal!70}{12: OA-TD}} & 10.62 & 10.62 & 10.62 & \textcolor{t0!100}{\textbf{0\%}} & 10.21 & 10.20 & 10.20 & \textcolor{t0!100}{\textbf{0\%}} & 99.22 & 99.28 \\
    & \raggedright\textbf{\textcolor{teal!70}{13: OA-AD}} & 10.62 & 10.62 & 10.62 & \textcolor{t0!100}{\textbf{0\%}} & 10.21 & 10.20 & 10.20 & \textcolor{t0!100}{\textbf{0\%}} & 99.32 & 99.28 \\
    \midrule
    \multirow{14}{*}{\cmark} & \raggedright\textbf{\textcolor{orange!70}{\phantom{*}0: VA-ND}} &  5.26 &  5.25 &  5.24 & \textcolor{t2!100}{\textbf{2\%}} &  4.61 &  4.04 &  3.41 & \textcolor{t30!100}{\textbf{30\%}} & 99.15 & 99.15\\
    & \raggedright\textbf{\textcolor{purple!70}{\phantom{*}1: AA-ND}} &  3.28 &  3.08 &  2.96 & \textcolor{t51!100}{\textbf{51\%}} &  4.59 &  3.97 &  3.35 & \textcolor{t34!100}{\textbf{34\%}} & 99.15 & 99.15\\
    & \raggedright\textbf{\textcolor{teal!70}{\phantom{*}2: VA-VD}} &  7.70 &  6.86 &  5.86 & \textcolor{t17!100}{\textbf{17\%}} &  5.81 &  5.78 &  5.76 & \textcolor{t2!100}{\textbf{2\%}} & 99.14 & 99.12\\
    & \raggedright\textbf{\textcolor{purple!70}{\phantom{*}3 AA-VD}} &  4.18 &  4.08 &  3.86 & \textcolor{t22!100}{\textbf{22\%}} &  4.63 &  4.27 &  3.86 & \textcolor{t25!100}{\textbf{25\%}} & 99.13 & 99.15 \\
    & \raggedright\textbf{\textcolor{teal!70}{\phantom{*}4: VA-AD}} & 10.55 & 10.55 & 10.55 & \textcolor{t0!100}{\textbf{0\%}} & 10.02 & 10.02 & 10.02 & \textcolor{t0!100}{\textbf{0\%}} & 99.09 & 99.08\\
    & \raggedright\textbf{\textcolor{purple!70}{\phantom{*}5: AA-TD}} &  4.04 &  3.74 &  3.54 & \textcolor{t27!100}{\textbf{27\%}} &  5.82 &  5.09 &  4.26 & \textcolor{t16!100}{\textbf{16\%}} & 99.11 & 98.78 \\
    & \raggedright\textbf{\textcolor{teal!70}{\phantom{*}6: TA-AD}} & 10.62 & 10.62 & 10.62 & \textcolor{t0!100}{\textbf{0\%}} & 10.20 & 10.20 & 10.20 & \textcolor{t0!100}{\textbf{0\%}} & 99.06 & 99.06\\
    & \raggedright\textbf{\textcolor{purple!70}{\phantom{*}7: AA-AD}} &  5.59 &  5.56 &  5.56 & \textcolor{t5!100}{\textbf{5\%}} &  5.47 &  5.16 &  4.99 & \textcolor{t14!100}{\textbf{14\%}} & 99.09 & 99.13\\
    & \raggedright\textbf{\textcolor{teal!70}{\phantom{*}8: AA-AD}} & 10.62 & 10.62 & 10.62 & \textcolor{t0!100}{\textbf{0\%}} & 10.12 & 10.12 & 10.12 & \textcolor{t0!100}{\textbf{0\%}} & 99.10 & 99.01 \\
    \cline{2-12}
    & \raggedright\textbf{\textcolor{orange!70}{\phantom{*}9: VA-BD}} & 10.62 & 10.62 & 10.62 & \textcolor{t0!100}{\textbf{0\%}} & 5.65 & 5.64 & 5.64 & \textcolor{t1!100}{\textbf{1\%}} & 99.15 & 99.15\\
    & \raggedright\textbf{\textcolor{purple!70}{10: OA-BD}} & 10.62 & 10.62 & 10.62 & \textcolor{t0!100}{\textbf{0\%}} & 5.18 & 4.80 & 4.11 & \textcolor{t17!100}{\textbf{17\%}} & 99.15 & 99.15 \\
    & \raggedright\textbf{\textcolor{purple!70}{11: AA-BD}} & 4.31 & 4.07 & 3.96 & \textcolor{t13!100}{\textbf{13\%}} & 5.04 & 4.65 & 4.20 & \textcolor{t19!100}{\textbf{19\%}} & 99.15 & 99.15 \\
    & \raggedright\textbf{\textcolor{teal!70}{12: OA-TD}} & 10.62 & 10.62 & 10.62 & \textcolor{t0!100}{\textbf{0\%}} & 10.26 & 10.26 & 10.26 & \textcolor{t0!100}{\textbf{0\%}} & 99.00 & 99.06\\
    & \raggedright\textbf{\textcolor{teal!70}{13: OA-AD}} & 10.62 & 10.62 & 10.62 & \textcolor{t0!100}{\textbf{0\%}} & 10.26 & 10.26 & 10.26 & \textcolor{t0!100}{\textbf{0\%}} & 99.10 & 99.06 \\
    \bottomrule
  \end{tabular}
  \label{tab:result1}
\end{table*}
\section{Discussion}
\label{sec:discussion}
% With AI-enabled decision-making becoming more pervasive and accessible, in domains like governance, finance, employment, and of course cybersecurity, more and more decisions are delegated to AI which becomes increasingly accountable for upholding safety and ethical constraints.
% As trustworthy AI is vital for the healthy functioning of whole ecosystems, this study highlights several security risks but also potential mitigations in these inherently black-box environments.

Our work has several implications for performing robust inference in the real-world.
While adversarial training remains the most reliable defense, the empirical robustness it imparts will vary and even be insufficient.
We note that this robustness is against \emph{all} adversarial examples under the same $L_p$-norm; active defenses protect only against querying attacks, but as they do transfer between attacks (cf. Scen. 12) they can be used jointly as complementary approaches.
We demonstrated how AI-enabled systems are susceptible to adaptive adversaries that \emph{devise} new evasive techniques and \emph{control them jointly} with other attack parameters.
This has been achieved in the \emph{fully black-box} case and \emph{against active defenses}.
Notably, the level of threat that adaptive adversaries pose against such systems is considerable, as it is straightforward to generalize Proposition~\ref{th:epg} to any other domain or modality.
This rekindles the proverbial arms race, where as a consequence defenses should also be equally capable and adaptive.

\textbf{Limitations}.
To keep the amount of evaluations practical, we narrowed the scope to targeted attacks and to $L_2$ as the more suitable norm for visual similarity.
Targeted attacks are strictly more difficult to perform than untargeted, while for binary classification targeted and untargeted coincide; our framework, however, can accommodate any adversarial goal or metric.
% the latter is also the prevalent mode in cybersecurity contexts.
Another simplifying assumption we make is that only one attack can take place at a time; however, the queuing technique we use for incoming queries is readily extensible to handle concurrent attacks.
While we demonstrate that our active defense does transfer between attacks, another possibility to explore is training the defense on queries from different kinds of attacks.
Finally, in our evaluation we focus on a wide range of adaptive and non-adaptive scenarios where agents learn and adapt interactively, thus limiting the number of datasets we experiment with; we back our empirical study however with an extensive theoretical analysis that supports the generality of our findings independent of context.

\textbf{Future Work}.
%In this work we studied how effective adaptive attacks and defenses can be, as well as their effects on each other.
The AMG framework we introduce is general by design and can accommodate the learning of optimal offensive and defensive policies in any domain of interest beyond image classification.
A promising path for future research is the extension of our adaptive attacks and defenses to other domains and modalities, for instance malware, bot, and network intrusion detection.
This is specifically because our approach circumvents the main obstacle of mapping gradient-based perturbations to feasible objects (eg. binaries) and instead can function directly in the problem space~\cite{pierazzi2020intriguing}.
%Another open path is to investigate defensive policies that transfer to all possible adversaries that can be present in the environment.
Another compelling and formidable challenge is automating the adaptive evaluations in \ac{AML}, that is adapting beyond a specification by inventing tools to bypass defenses and thus imparting controllability to adversarial tasks.
Finally, in our work we considered opponent agency as part of the environment; other domains, like malware detection, might benefit from explicit opponent modeling.

\section{Conclusion}

With adaptive, decision-based attacks becoming more pervasive in multiple domains, every AI-based system that exposes a queryable interface is inherently vulnerable.
To aggravate matters, this vulnerability cannot be mitigated by employing model hardening approaches like adversarial training alone.
To fully defend in the presence of such attacks, active \emph{and} adaptive defenses are necessary, and we demonstrate how optimal defensive policies can be learned.
However, the existence of such defenses elicits in turn adaptive attacks which are able to recover part of their original performance.

We perform a theoretical and empirical investigation of decision-based attacks and stateful defenses under a unified framework we name ``Adversarial Markov Games'' (AMG).
In self-adaptive, we introduce a novel twofold definition of adaptive: both inventing new techniques to outmaneuver opponents \textit{and} adapting one's operating policy with respect to other agency in the environment.
Furthermore, through our theoretical analysis we demonstrate how any combination of adversarial goals, be it performance, stealthiness~\cite{debenedetti2024evading}, or disruption, can be optimized in a gradient based manner, even in the \emph{complete} black-box case and in \emph{any} domain.
As new attacks and defenses constantly emerge and are surpassed, our proposed methodology is generally applicable as it turns any such approaches in the current arms-race self-adaptive, thus ensuring accurate and robust assessment of their performance.

The AMG framework we introduce helps us reason on and properly assess the vulnerabilities of AI-based systems, disentangling the inherently complex and non-stationary task of learning in the presence of competing agency.
By modeling the latter as part of the environment, we can simplify this task by computing a best response to the observed behavior.
This has a significant consequence for the security of AI-based systems independent of modality or application: as long as proper threat modeling is carried out, one can readily employ \ac{RL} agents in order to devise optimal defenses, but only after they devised optimal attacks too.

% Based on the presented reasoning and supporting empirical results, we urge the AML community to employ AMG as a powerful benchmark for adaptive black-box attacks and defenses for more complete and reliable robustness measurements across a variety of applications and data modalities.

% \bibliographystyle{abbrv}
\bibliographystyle{IEEEtran}
\bibliography{references}

% \newpage
\appendices

\section{Proofs}
\label{apx:proofs}

For a more intuitive understanding of the proofs, we provide a high-level description of the attack fundamentals.
\textbf{BAGS} \cite{brunner2019guessing} performs a random walk along the boundary between the adversarial and the non-adversarial regions, by first taking a random step orthogonal to the original image direction, then a source step towards it.
The randomness in the directions searched is reduced by utilizing Perlin noise and masks computed on the difference between starting and original samples.
\textbf{HSJA} \cite{chen2020hopskipjumpattack} operates in 3 stages: a binary search that places the current best adversarial on the decision boundary, an estimation step that computes the gradient at that point of the boundary, and projection step along the estimated gradient.
These steps repeat until convergence.
\vspace{1em}

\textbf{[Proposition \ref{prop:one}]}
\vspace{-1em}
\begin{proof}
% The proof is constructed in two parts: first that an alternative decision function $D$ is necessary, and second that a stateful representation of a query $x_t$ is also necessary.
We proceed in two parts, corresponding to conditions (a) and (b).

\textbf{[Part 1]} Let us denote by $x_c, x_g, x_t$ the original (unperturbed), the starting, and the current sample at step $t$ respectively.
Given a target class $c_0 \in m$ we define a function:
\begin{equation}
    S_{x_c}(x_t) = F_{c_0}(x_t) - \underset{c\neq c_0}{\max}(F_c(x_t))
\label{eqn:S}
\end{equation}

HSJA operates in 3 stages that alternate until convergence: (1) binary search between $x_g$ and $x_c$ that places $x_t$ on the decision boundary, (2) gradient estimation approximating $\nabla S(x_t)$, (3) a step along the direction of the gradient $\nabla S(x_t)$. We repeat Eq. 9 of \cite{chen2020hopskipjumpattack}, denoting the gradient direction as the Monte Carlo estimate:

\begin{equation}
    \widetilde{\nabla S_{x_c}}(x_t,\delta) = \frac{1}{B} \sum_{b=1}^{B} \phi_{x_c}(x_t + \delta u_b)u_b
\label{eqn:nabla}
\end{equation}

where $\{u_b\}_{b=1}^{B}$ are i.i.d. draws from the uniform distribution over the $d$-dimensional sphere, $\delta$ is a small positive parameter, and $\phi_{x_c}$ is the Boolean-valued function that all stages rely on:
\begin{equation}
    \phi_{x_c}(x_t) = \text{sign}(S_{x_c}(x_t)) = 
    \begin{cases}
    +1 & \text{if} \quad S(x_t) > 0,\\
    -1 & \text{if} \quad S(x_t) \leq 0.
    \end{cases}
\label{eqn:boolphi}
\end{equation}

Given $x_c$, in search of adversarial examples HSJA iteratively applies the following update function:
\begin{equation}
    x_{t+1} = a_{t}x_c + (1-a_t)\Biggl\{x_t + \xi_t \frac{\nabla S_{x_c}(x_t)}{\|\nabla S_{x_c}(x_t)\|_2}\Biggl\}
\label{eqn:update}
\end{equation}

where $\xi_t$ is a positive step size and $a_t$ is a line search parameter in $[0,1]$ s.t. $S(x_{t+1}) = 0$, i.e. the next query lies on the boundary.
Now let us assume that the decision function $D$ is $\operatorname*{arg\,max}$, i.e. $D: \mathbb{R}^m \mapsto \mathbb{N}^m, \:C(x) = D(F_c(x)) = \operatorname*{arg\,max} F_c(x)$, then from Eq. \ref{eqn:classifier} and Eq. \ref{eqn:S} we have:
\begin{equation}
\begin{aligned}
    S_{x_c}(x_t) > 0 & \iff C(x_t) = c_0\\
    S_{x_c}(x_t) < 0 & \iff C(x_t) \neq c_0\\
    S_{x_c}(x_t) = 0 & \iff C(x_t) = \{c_0,a\},\:a\neq c_0\\
    \implies S_{x_c}(x_t) \leq 0 & \iff C(x_t) \neq c_0\\
\end{aligned}
\label{eqn:sc}
\end{equation}

Let us define the function $\mathcal{I}$ of two variables:
\begin{equation}
    \mathcal{I}(a,b) :=
    \begin{cases}
    +1 & \text{if} \quad a = b,\\
    -1 & \text{if} \quad a \neq b.
    \end{cases}
\label{eqn:id}
\end{equation}

From \ref{eqn:sc} and \ref{eqn:id} we can rewrite Eq. \ref{eqn:boolphi} as follows: 
\begin{equation}
    \phi_{x_c}(x_t) = \mathcal{I}(C(x_t), c_0)
\label{eqn:phi}
\end{equation}

Provided that the gradient estimation happens at the decision boundary where $S(x_t) = 0$, Theorem 2 of \cite{chen2020hopskipjumpattack} guarantees that the gradient estimation is an asymptotically unbiased direction of the true gradient:
\begin{equation}
  \widetilde{\nabla S_{x_c}}(x_t,\delta) \approx \nabla S_{x_c}(x_t), \delta \rightarrow 0
\label{eqn:approx}
\end{equation}

For $b_t = 1- a_t$ and by plugging \ref{eqn:phi} \& \ref{eqn:approx} in Eq. \ref{eqn:nabla}, and the result in \ref{eqn:update}, we get:
\begin{align}
    &x_{t+1} = a_{t}x_c + b_t\Biggl\{x_t + \xi_t \frac{\nabla S_{x_c}(x_t)}{\|\nabla S_{x_c}(x_t)\|_2}\Biggl\} \notag \\
    &= a_{t}x_c 
    + b_t\Biggl\{x_t + \xi_t \frac{\frac{1}{B} \sum_{b=1}^{B} \phi_{x_c}(x_t + \delta u_b)u_b}{\|\frac{1}{B} \sum_{b=1}^{B} \phi_{x_c}(x_t + \delta u_b)u_b\|_2}\Biggl\} \label{eqn:all} \\
    &= a_{t}x_c + b_t\Biggl\{x_t + \xi_t \frac{\frac{1}{B} \sum_{b=1}^{B} \mathcal{I}(C(x_t + \delta u_b), c_0)u_b}{\|\frac{1}{B} \sum_{b=1}^{B} \mathcal{I}(C(x_t + \delta u_b), c_0)u_b\|_2}\Biggl\} \notag
\end{align}

In Eq. \eqref{eqn:all}, the iterates $x_t$ are \textit{guaranteed} by Theorem 1 of HSJA \cite{chen2020hopskipjumpattack} to converge to a stationary point $x_b$ of Eq. \eqref{eqn:opt}, that is $\mathbb{E}[\sum_{i=1}^{N}d(x^i_b,x^i_c)] < \epsilon$, for $\epsilon$ a standard imperceptibility threshold and $N$ adversarial episodes, which contradicts the requirement $\mathbb{E}[\sum_i d(x_b^i,x_c^i)] \ge \epsilon$.
Since the only model-dependent term in Eq. \eqref{eqn:all} is the classifier $C(\cdot)$, the contradiction can be avoided only with an alternative classifier $C'$.
With $C(x)=D(F_c(x))$, and the discriminant function $F_c$ unable to change without retraining, it follows that $D'\neq \operatorname*{arg\,max}$, so for adv. example $x_t$ misclassified as $c_0$, Eq. \ref{eqn:psi} can return -1:
% an adversarial example $x_b, \; d(x_b,x_c) \leq \epsilon$

\begin{equation}
\begin{aligned}
    C(x_t) = c_0  \Rightarrow \psi(x_t) = -1 \\
    \therefore C(x_t) = \hat{c}, \: \hat{c} = \{c_0, m\setminus c_0, \varnothing\}
\end{aligned}
\end{equation}

\noindent where $\varnothing$ denotes rejection and $\{ m\setminus c_0\}$ denotes misdirection, i.e. intentional misclassification.

\textbf{[Part 2]} Let us assume that at timestep $t$, $x_t$ is not yet adversarial, it is however still \textit{part of} an ongoing adversarial attack.
To deter the attack, a perfect defense would have to misclassify/reject this example; yet if an identical but benign example $x_n$ was submitted, classifier $C$ should preserve its capacity to classify it correctly.
Since any memoryless classifier must assign the same label whenever $x_n = x_t$, we require a richer decision rule $C'(x, \tau) = D\bigl(\tau,\;F_c(x)\bigr)$ that takes as auxiliary input context $\tau$, e.g. the history of queries $\{x_0,\dots,x_t\} \cup x_n$.  By construction, even if \(x_n = x_t\), differing contexts \(\tau_n \neq \tau_t\) can force $C'(x_n,\tau_n) \;\neq\; C'(x_t,\tau_t)$, thereby separating benign from adversarial queries.
% For the identical examples $x_n = x_t$ we want $C(x_n) \neq C(x_t)$, something possible only with context.
% This context $\tau$, based on the sequence of incoming queries  $x \in \{x_t, t = 0,1,2,...,k\} \cup x_n$, can be provided as an additional argument to the decision function $D(\tau, F_c(x))$ s.t. while $x_n = x_t \implies D(\tau, F_c(x_n)) \neq D(\tau, F_c(x_t)) $.
\label{prf:one}
\end{proof}

\textbf{[Proposition \ref{prop:two}]}
We annotate terms with $\mathcal{D}$ when an active defense $\pi_\phi^{\mathcal{D}}$ is present, and with $\cancel{\mathcal{D}}$ otherwise.
\begin{proof}
\textbf{BAGS}.
This attack is in effect a gradual interpolation from $x_g$ towards $x_c$, by first taking orthogonal steps $x_s$ on the hypersphere around $x_c$ and then source steps towards $x_c$ in order to minimize $d(x_c - x_b)$, where $x_b$ is the best adversarial example found so far.
The source step parameter $\epsilon = (1.3 - \min(\lambda_n, 1)) \cdot c$ -- with $\lambda_n$ the ratio of the $n$ last queries $x_t$ that are adversarial and $c$ a positive constant -- controls the projection towards $x_c$:
\begin{equation}
    x_t = x_s + \epsilon \cdot (x_c - x_s)
\label{eqn:source}
\end{equation}

Then if we again assume that a non-zero amount of the adversarial queries $x_t$ is flagged as such by the defense, it follows that $\lambda^{\cancel{\mathcal{D}}}_n > \lambda^\mathcal{D}_n$ and from the definition of $\epsilon$ we get $\epsilon^\mathcal{D} < \epsilon^{\cancel{\mathcal{D}}}$.
At given $t$, from Eq. \eqref{eqn:source} we get that $d(x_c,x^\mathcal{D}_t) > d(x_c, x^{\cancel{\mathcal{D}}}_t)$, and ceteris paribus the expectation \eqref{eqn:rew} will be larger with $\pi_\phi^{\mathcal{D}}$ present than without.

\textbf{HSJA}.
We denote the queries during gradient estimation as $x_n = x_t + \delta u$, $u \sim Uniform_{Sphere}(d)$, the ratio of those $x_n$ detected as adversarial by the active defense as $\eta \in [0,1]$, and the estimate $\widetilde{\nabla S_{x_c}}(x_t,\delta)$ as $u_t$.
We investigate the behavior of active defenses as the ratio of detections $\eta$ goes to 1.

For $\eta = 1 \implies \mathbb{E}[\phi_{x_c}(x_n)] = -1$, and as $u_b$ are uniformly distributed, from Eq. \eqref{eqn:nabla} we get:
\begin{equation}
\begin{aligned}
    \lim_{\eta\to1} u_t &= \lim_{\eta\to1} \frac{1}{B} \sum_{b=1}^{B} \phi_{x_c}(x_t + \delta u_b)u_b = \frac{1}{B} \sum_{b=1}^{B} -u_b
\label{eqn:limit}
\end{aligned}
\end{equation}

%Under no defense, $\eta = 0$
%For large enough $B$, Eq. \eqref{eqn:limit} goes to 0.
At the limit of detection we observe that the gradient estimate $u_t$ behaves like a uniformly drawn vector around $x_t$ of shrinking size.
By the Law of Large Numbers, as $B$ increases the average direction of $u_t$ will align with the expected value: that is a random direction on the unit hypersphere.
However, due to the $\frac{1}{B}$ term, the size of $u_t$ goes to 0.
From Eq. \eqref{eqn:limit} then we get: $\lim_{\eta\to1} u_t = 0$.
The gradient estimation step is followed by the ``jump'' step that computes $x_{t+1}$ as follows:
\begin{equation}
    x_{t+1} = x_t + \xi {u_t}
\end{equation}

As the ratio of detections $\eta$ approaches 1, we observe that the adversarial iterates $x_{t+1}$ converge prematurely: then all else being equal and for given $t$, $d(x_c,x^\mathcal{D}_t) > d(x_c, x^{\cancel{\mathcal{D}}}_t)$.

\end{proof}

\textbf{[Proposition \ref{th:epg}]}

\begin{proof}
Let $\pi_\theta^\mathcal{A}$ be the adversarial policy generating queries $x_t$, in $N$ episodes $\tau_i$ of length $L$.
The defense $\pi_\phi^{\mathcal{D}}$ \eqref{eqn:def}, upon receiving a query $x_t$ outputs a decision $\alpha_t = \pi_\phi^{\mathcal{D}}(x_t, s_t^\mathcal{D}) \in \{0,1\}$, with $1$ and $0$ indicating rejection and acceptance respectively.
The goal of the adversary is to find the parameters $\theta^*$ that maximize the expected reward:

\begin{equation}
\mathcal{J}(\theta) = \mathbb{E}_{\pi_\theta^\mathcal{A}} [r(\tau_i)] = \frac{1}{N}\sum_{i=1}^{N}r(\tau_i) = \frac{1}{N} \sum_{i=1}^{N} \sum_{t=1}^{L} (1 - \alpha_t)
\label{eqn:expect}
\end{equation}

The gradient of $\mathcal{J}(\theta)$ with respect to the policy parameters $\theta$ is the direction of steepest ascent for Eq. \eqref{eqn:expect}.
Through the Policy Gradient Theorem \cite{sutton1999policy} we can express the gradient of the expected reward in terms of the gradient of the log-likelihood of the policy, weighted by the reward:

\begin{equation}
\begin{aligned}
    \nabla_\theta \mathcal{J}(\theta) &= \nabla_\theta \mathbb{E}_{\pi_\theta^\mathcal{A}} [r(\tau_i)]\\
    &= \mathbb{E}_{\pi_\theta^{\mathcal{A}}}\left [r(\tau_i)\nabla_\theta \log \pi_\theta(\tau_i)\right]\\
    &= \mathbb{E}_{\pi_\theta^{\mathcal{A}}}\left[\sum_{t=1}^{L}(1 - a_t) \nabla_\theta \log \pi_\theta^{\mathcal{A}}(\tau_i)\right]\\
    &\approx \frac{1}{N}\sum_{i=1}^{N}\sum_{t=1}^{L}(1 - a_t) \nabla_\theta \log \pi_\theta^{\mathcal{A}}(\tau_i)\\
\label{eqn:rew_app}
\end{aligned}
\end{equation}
% The log-likelihood of the policy generating the episode $\tau_i$ is:
% \[
% \log \pi_\theta^\mathcal{A} (\tau_i) = \sum_{t=1}^{N} \log \pi_\theta^\mathcal{A} (x_t \mid x_{1:t-1}),
% \]
% where $\pi_\theta^\mathcal{A} (x_t \mid x_{1:t-1})$ is the probability of selecting query $x_t$ given the history $x_{1:t-1}$.
% In practice, we estimate \eqref{eqn:rew_app} by sampling $M$ episodes $\{\tau_i^{(i)}\}_{i=1}^{M}$ from the policy $\pi_\theta^\mathcal{A}$ and computing:
% \[
% \nabla_\theta \mathcal{J}(\theta) \approx \frac{1}{M} \sum_{i=1}^{M} r(\tau_i^{(i)}) \nabla_\theta \log \pi_\theta^\mathcal{A} (\tau_i^{(i)}).
% \]
The gradient is thus estimated by sampling $N$ episodes from the policy $\pi_\theta^\mathcal{A}$ to compute Eq. \eqref{eqn:rew_app}.
To maximize the expectation, we iteratively update the policy parameters $\theta$ using gradient ascent: $\theta \leftarrow \theta + \eta \nabla_\theta \mathcal{J}(\theta)$.
With $\eta > 0$ the learning rate, this process converges to $\theta^*$.
Recall that Eq. \eqref{eqn:expect} attains its maximum for $\sum_{i=1}^{N}\sum_{t=1}^{L}a_t = 0$, therefore the converged policy $\pi^\mathcal{A}_{\theta^*}$ will correspond to the optimal evasive policy.

\end{proof}

\section{On States \& Rewards}
\label{apx:rew}
\textbf{[States]}.
% In POMDPs a single observation cannot constitute a Markovian state as typically there is some of dependence on the state history.
% Partial observability is handled by either using recurrent architectures for the policy and value networks, or by engineering a state that includes past information: in this work we opt for the later.
% By merit of operating in black-box environments, the optimization process for both offensive and defensive agents can be described as a POMDP.
To handle the partial observability, we engineer states that incorporate past information.
For \textbf{BAGS}, the adversary uses an 8-dimensional state representation with the following information normalized in the range $[0,1]$ : current amount of queries $i$, average queries that are adversarial $a$, the initial gap $g$, the current gap $d$, the location $l = \frac{d}{g}$, the slope $s = m - l$ where $m$ is a moving average of the location, the frequency of improvement $f$, and $r$ which is a moving average of the perturbation reduction $n$, 
In \textbf{HSJA} the state representation is slightly different: $r = \frac{n}{g}$, and $f = \frac{1}{j}$ with $j$ number of jump steps in last iteration.

For the defense, in \textbf{HSJA} (and to a lesser extent \textbf{BAGS}) it has been difficult to engineer a state for policies to effectively learn on.
The knowledge of the attack internals and fundamentals, geometric properties and distances, model activations and logits, and any combination thereof, did not suffice.
Ultimately we decided to learn a representation instead.
This representation is a 64-dimensional embedding of a CNN trained with triplet loss, on data generated by HSJA and benign queries, with the input being a tensor of the last query subtracted from the 25 most recent adversarial queries and then stacked.

\textbf{[Rewards]}. The concrete definitions of the rewards for each type of agent are:

\begin{itemize}[leftmargin=*]
    \setlength\itemsep{0.5em}
    \item BAGS adversary: with $x \in [1,50]$ the number of queries to a better adversarial example and $t$ the maximum queries: $\textbf{R1} = \frac{n\cdot x}{g} $ if $n > 0$ else 0 \text{\textbar} $\textbf{R2} = \frac{n}{g\cdot(x+1)} $ if $n > 0$ else 0 \text{\textbar} $\textbf{R3} = (1-\sqrt{\frac{d}{g}})^2 - (1-\sqrt{\frac{d+n}{g}})^2$ \text{\textbar} $\textbf{R4} = \sqrt{i} \cdot R2$ \text{\textbar} $\textbf{R5} = |\log(d/g)|$ if $i\geq t$ else 0 \text{\textbar} $\textbf{R6} = \sqrt[4]{i} \cdot a$ \text{\textbar} $\textbf{R7} = R4 + R6$.
    \item HSJA adversary: with $e$ the gradient estimation steps: $\textbf{R1} = 2 \cdot n$ \text{\textbar} $\textbf{R2} = \frac{-e}{1000} + R1$ \text{\textbar} $\textbf{R3} = \frac{10\cdot n}{d} $ \text{\textbar} $\textbf{R4} = \frac{1}{d}$ \text{\textbar} $\textbf{R5} = \frac{2\cdot (g - d)}{g}$ if $i\geq t$ else 0 \text{\textbar} $\textbf{R6} = 2\cdot (0.5 - |\frac{a+1}{2} - 0.5|) + b$, where $b = \frac{j}{20}$ if $j<3$ else 0 \text{\textbar} $\textbf{R7} = R3 + R6$ \text{\textbar} $\textbf{R8} = R5 + R6$.
    \item BAGS defender: where $x_g$ is the starting sample, $x_t$ the last query, $x_b$ the best adversarial so far, $s_t$ the average step size between queries, $h \in [0,1]$ the last action of the defender, $z \in [0,1]$ the $\ell_2$ distance of $x_t$ and the last known adversarial query in embedding space, $x$: $\textbf{R1} = |\log(0.1 g + \lVert x_g, x_b \rVert _{\ell_2})| \cdot 0.1$ \text{\textbar} $\textbf{R2} = |\log_{10}s_t| $ \text{\textbar} $\textbf{R3} = \frac{g}{\lVert x_g, x_t \rVert}$ \text{\textbar} $\textbf{R4} = - \psi (x_t)$, where $\psi$ is Eq. \ref{eqn:psi} \text{\textbar} $\textbf{R5} = h - z$.
    \item HSJA defender: where $x_{BS}$ are queries during the binary search: $\textbf{R1} = 1 - 2(\frac{\lVert x_g, x_b \rVert}{g})$ \text{\textbar} $\textbf{R2} = h - z$ \text{\textbar} $\textbf{R3} = R2 - 2\psi (x_{BS})$ \text{\textbar} $\textbf{R4} = - \lVert \psi(x_{BS}) \rVert$ \text{\textbar} $\textbf{R5} = R2$ if $\psi(x_t)$ else $2\cdot R2$.
    \item For both BAGS and HSJA defenders, the aforementioned are the rewards when $x_t$ is adversarial; when it is benign, the reward is $R = 1 - h$ if the model responded correctly, otherwise $R = -1$.
\end{itemize}

\begin{table}[ht!]
\centering
\renewcommand*{\arraystretch}{1.05}
\caption{Input Transformations.}
\begin{tabular}{r|r|r}
\toprule
\textbf{Input Transformations} &\bf Magnitude &\bf Probability\\
\midrule
Brightness \& Contrast & 0 -- 0.5 & 0 -- 1\\
Random Horizontal Flip & -- & 0 -- 1\\
Random Vertical Flip & -- & 0 -- 1\\
Sharpness & 0.8 -- 1.8 & 0 -- 1 \\
Perspective & 0.25 -- 0.5 & 0 -- 1 \\
Rotation & \textdegree 0 -- \textdegree 180 & 0 -- 1\\
Uniform Pixel Scale & 0.8 -- 1.2 & 0 -- 1 \\
Crop \& Resize & 0.6 -- 1 & 0 -- 1\\
Translation & -0.2 -- 0.2 & 0 -- 1\\
\bottomrule
\end{tabular}
\label{tbl:transforms}
\end{table}

\section{Base Rate of Attacks}
\label{baserate}
% Besides CIFAR-10, We followed the same evaluation protocol to assess all the scenarios mentioned in Section \ref{sec:evaluation} also on the MNIST dataset; the results are shown in Table \ref{tab:result1}.

% \subsection{Base Rate of Attacks}

In all the evaluations so far we use a fixed probability $P(adv)=0.5$ that an incoming query is adversarial.
To assess how our adaptive stateful defenses (Scenarios 4 \& 6) perform in the complete absence of attacks, we evaluate them with $P(adv)=0$ \textit{without} retraining; the results are shown in Table~\ref{tbl:zero}.
We observe a small reduction in the accuracy on clean samples that can be attributed to the considerably different base rate of adversarial and benign queries.
Note however that as the probability of adversarial queries is an intrinsic property of each environment, if the base rate of attacks changes the defensive agents can be retrained to adjust to it.

\begin{table}[h]
\small
\caption{Clean accuracy on CIFAR-10 for scenarios 4 \& 6, where $P(adv)$ denotes the probability that a query is part of an attack.}
\centering
\renewcommand*{\arraystretch}{1.05}
\begin{tabular}{c|r|r|r|r|r|}
\toprule
\multirow{2}{*}{\parbox{1cm}{\centering Adv.\\Trained}} & \multirow{2}{*}{$P(adv)$} & \multirow{2}{*}{BAGS4} & \multirow{2}{*}{BAGS6} & \multirow{2}{*}{HSJA4} & \multirow{2}{*}{HSJA6} \\
& & & & & \\
\midrule
\multirow{2}{*}{\xmark} & 0.5 & 91.55 & 91.38 & 91.59 & 91.62 \\
& 0.0 & 90.91 & 90.95 & 90.48 & 90.86 \\
\midrule
\multirow{2}{*}{\cmark} & 0.5 & 87.61 & 87.50 & 87.58 & 87.68 \\
& 0.0 & 87.02 & 87.11 & 86.70 & 86.98 \\
\bottomrule
\end{tabular}
\label{tbl:zero}
\end{table}

\section{Models \& Hyperparameters}
\label{apx:hyper}
The image classification models we use are ResNet-20 for CIFAR-10 and a standard 2 convolutional / 2 fully-connected layer NN for MNIST.
For adversarially training models, we follow the canonical approach as described in \cite{wang2019convergence}: the model is trained for 20 epochs, where the first 10 are trained normally and the last 10 on batches containing additional adversarial examples generated with 40 steps of PGD.
For learning the similarity space, that is the metric space where defensive agents control the radius of interception around which a query is adversarial or not, we use a Siamese CNN.
This network is trained with contrastive loss, where dissimilar examples are generated by adding Gaussian noise and performing evasive transformations on the input from the list in Table \ref{tbl:transforms}.
%To learn the state representation for the defensive agents, we train a CNN with triplet loss on data generated by HSJA and random benign queries.
%The training hyperparameters for the CNNs used in our work are shown in Table \ref{tbl:tcn}.
For the PPO agents trained for each scenario, we use the open source library Stable-Baselines3 \footnote{https://github.com/DLR-RM/stable-baselines3}.
Policies are parameterized by a two fully-connected layer NN; the hyperparameter search space is shown in Table \ref{tbl:agents}.

\begin{figure*}[]
    \centering
    \includegraphics[width=0.7\textwidth]{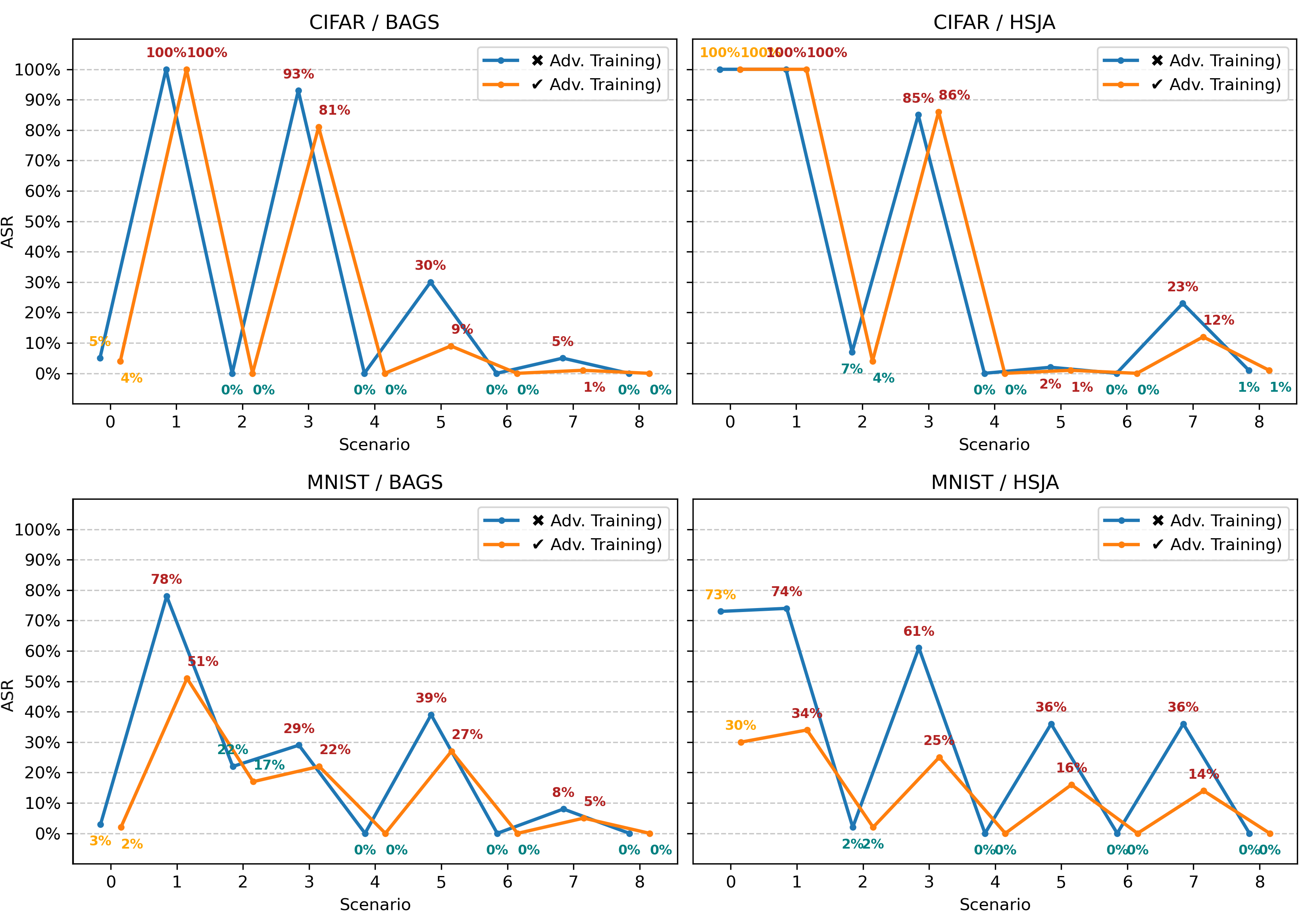}
    \caption{\small{Progression of ASR over successive adaptations. Red and green values in ASR denote offensive and defensive scenarios respectively.}}
    \label{fig:plot}
\end{figure*}

\textbf{Blacklight \& OARS}. For evaluating Blacklight \cite{li2022blacklight} and OARS \cite{feng2023stateful} we use their default hyperparameters as those are provided in the publicly available implementations.
In particular, as OARS spends 200 extra queries per episode to adapt the proposal distribution, we add those on top of b the evaluation budget.
Additionally, as our defense is not rejection based, we replace the rejection decision with a non-adversarial one.

\begin{table}[h]
\centering
\renewcommand*{\arraystretch}{1.05}
\caption{Hyperparameter ranges during PPO training.}
\begin{tabular}{|r|r|r|}
\toprule
\textbf{Hyperparameter} &\bf BAGS &\bf HSJA\\
\midrule
learning rate & 3e-3 -- 1e-4 & 3e-3 -- 1e-4 \\
episode steps & 600 -- 3000 & 1000 -- 5000 \\
total steps & 1e5 -- 4e5 & 2e4 -- 2e5\\
total queries & 25K -- 100K & 5K -- 50K\\
batch size & 32 -- 128 & 32 -- 64\\
buffer size & 2048 -- 2048 & 64 -- 1024 \\
epochs & 20 -- 20 & 20 -- 20 \\
gamma & 0.85 -- 0.99 & 0.9 -- 0.99\\
\bottomrule
\end{tabular}
\label{tbl:agents}
\end{table}

\end{document}